\theoremstyle{plain}
\newtheorem{theorem}{Theorem}[section]
\newtheorem*{theorem*}{Theorem}
\newtheorem{proposition}[theorem]{Proposition}
\newtheorem{lemma}[theorem]{Lemma}
\newtheorem{ansatz}[theorem]{Ansatz}
\theoremstyle{definition}
\newtheorem{definition}[theorem]{Definition}
\newtheorem{assumption}[theorem]{Assumption}
\theoremstyle{remark}
\author{%
  Daniel Beaglehole$^{*,1}$
  \quad
  Peter Súkeník$^{*,2}$
  \quad
  Marco Mondelli$^{2}$
  \quad
  Mikhail Belkin$^{1}$\\~\\
  $^1$UC San Diego \quad $^2$Institute of Science and Technology Austria\\
}
\title{Average gradient outer product as a mechanism for deep neural collapse}
\begin{document}

\maketitle
\def\thefootnote{*}\footnotetext{Equal contribution. Correspondence to: Daniel Beaglehole (\texttt{dbeaglehole@ucsd.edu}), Peter Súkeník (\texttt{peter.sukenik@ista.ac.at}).}

\begin{abstract}

Deep Neural Collapse (DNC) refers to the surprisingly rigid structure of the data representations in the final layers of Deep Neural Networks (DNNs). Though the phenomenon has been measured in a variety of settings, its emergence is typically explained via data-agnostic approaches, such as the unconstrained features model. In this work, we introduce a data-dependent setting where DNC forms due to feature learning through the average gradient outer product (AGOP). The AGOP is defined with respect to a learned predictor and is equal to the uncentered covariance matrix of its input-output gradients averaged over the training dataset. The Deep Recursive Feature Machine (Deep RFM) is a method that constructs a neural network by iteratively mapping the data with the AGOP and applying an untrained random feature map. We demonstrate empirically that DNC occurs in Deep RFM  across standard settings as a consequence of the projection with the AGOP matrix computed at each layer. Further, we theoretically explain DNC in Deep RFM in an asymptotic setting and as a result of kernel learning. We then provide evidence that this mechanism holds for neural networks more generally. In particular, we show that the right singular vectors and values of the weights can be responsible for the majority of within-class variability collapse for DNNs trained in the feature learning regime. As observed in recent work, this singular structure is highly correlated with that of the AGOP.
\end{abstract}

\section{Introduction}

How Deep Neural Networks (DNNs) learn a transformation of the data to form a prediction and what are the properties of this transformation constitute fundamental questions in the theory of deep learning. A promising avenue to understand the hidden mechanisms of DNNs is Neural Collapse (NC) \citep{papyan2020prevalence}. NC is a widely observed structural property of overparametrized DNNs, occurring in the terminal phase of gradient descent training on classification tasks. This property is linked with the performance of DNNs, such as generalization and robustness \citep{papyan2020prevalence, su2023robustness}. NC is defined by four properties that describe the geometry of feature representations of the training data in the last layer. Of these, in this paper we focus on the following two, because they are relevant in the context of deeper layers. The \textit{within-class variability collapse} (NC1) states that feature vectors of the training samples within a single class collapse to the common class-mean. The \textit{orthogonality or simplex equiangular tight frame property} (NC2) states that these class-means form an orthogonal or simplex equiangular tight frame. While initially observed in just the final hidden layer, these properties were measured for the intermediate layers of DNNs as well \citep{rangamani2023feature, he2022law}, indicating that NC progresses depth-wise. This has lead to the study of Deep Neural Collapse (DNC), a direct depth-wise extension of standard NC \citep{sukenik2023deep}.

The theoretical explanations of the formation of (D)NC have mostly relied on a data-agnostic model, known as \textit{unconstrained features model} (UFM) \citep{mixon2020neural, lu2020neural}. This assumes that DNNs are infinitely expressive and, thus, optimizes for the feature vectors in the last layer directly. The deep UFM (DUFM) was then introduced to account for more layers  \citep{sukenik2023deep, tirer2022extended}. The (D)UFM has been identified as a useful analogy for neural network collapse, as (D)NC emerges naturally in this setting for a variety of loss functions and training regimes (see Section~\ref{sec:related}). 
However, (D)UFM discards the training data and most of the network completely, thus ignoring also the role of learning in DNC formation. This is a serious gap to fully understand the formation of DNC in the context of the full DNN pipeline. 

In this paper, we introduce a setting where DNC forms through a learning algorithm that is highly dependent on the training data and predictor - iterated linear mapping onto the average gradient outer product (AGOP). The AGOP is the uncentered covariance matrix of the input-output gradients of a predictor averaged over its training data. Recent work has utilized this object to understand various surprising phenomena in neural networks including grokking, lottery tickets, simplicity bias, and adversarial examples \citep{AGOPScience}. Additional work incorporated layer-wise linear transformation with the AGOP into kernel machines, to model the deep feature learning mechanism of neural networks \citep{beaglehole2023mechanism}. The output of their backpropagation-free method, Deep Recursive Feature Machines (Deep RFM), is a standard neural network at i.i.d. initialization, where each random weight matrix is additionally right-multiplied by the AGOP of a kernel machine trained on the input to that layer. Their method was shown to improve performance of convolutional kernels on vision datasets and recreate the edge detection ability of convolutional neural networks.

Strikingly, we show that the neural network generated by this very same method, Deep RFM, consistently exhibits standard DNC with little modification (Section~\ref{sec: deep rfm}). We establish in this setting that projection onto the AGOP is responsible for DNC in Deep RFM both empirically and theoretically. We verify these claims with an extensive experimental evaluation, which demonstrates a consistent formation of DNC in Deep RFM across several vision datasets. We then provide theoretical analyses that explain the mechanism for Deep RFM in the asymptotic high-dimensional regime and derive NC formation as a consequence of kernel learning (Section~\ref{sec: deep rfm theory}). Our first analysis is primarily based on the approximate linearity of kernel matrices when the dimension and the number of data points are large and proportional. Our second analysis demonstrates that DNC is an implicit bias of optimizing over the choice of kernel and its regression coefficients.

We then give substantial evidence that projection onto the AGOP is closely related to DNC formation in standard DNNs. In particular, we show that within-class variability for DNNs trained using SGD with small initialization is primarily reduced by the application of the right singular vectors of the weight matrix and the subsequent multiplication with its singular values (Section~\ref{sec: noise cancelling}). These singular structures of a weight matrix $W$ are fully deducible from the Gram matrix of the weights at each layer, $W^\top W.$ \citet{AGOPScience, beaglehole2023mechanism} have identified that, in many settings and across all layers of the network, $W^\top W$ is highly correlated with the average gradient outer product (AGOP) with respect to the inputs to that layer, in a statement termed the \textit{Neural Feature Ansatz} (NFA). Thus, the NFA suggests neural networks extract features from the data representations at every layer by projection onto the AGOP with respect to that layer.

Our results demonstrate that \emph{(i)} AGOP is a mechanism for DNC in Deep RFM, and \emph{(ii)} when the NFA holds, i.e. with small initialization, the right singular structure of $W$, and therefore the AGOP, induces the majority of within-class variability collapse in DNNs. We thus establish projection onto the AGOP as a setting for DNC formation that incorporates the data through feature learning.
\vspace{-0.4cm}
\section{Related work}\label{sec:related}
\vspace{-0.4cm}
\paragraph{Neural collapse (NC).} Since the seminal paper by \citet{papyan2020prevalence}, the phenomenon of neural collapse has attracted significant attention. 
\citet{galanti2022improved} use NC to improve generalization bounds for transfer learning, while \citet{wang2023far} discuss transferability capabilities of pre-trained models based on their NC on the target distribution. \citet{haas2022linking} argue that NC can lead to improved OOD detection. Connections between robustness and NC are discussed in \citet{su2023robustness}.
For a survey, we refer the interested reader to \citet{kothapalli2022neural}.

The main theoretical framework to study the NC is the Unconstrained Features Model (UFM) \citep{mixon2020neural, lu2020neural}. 
Under this framework, many works have shown the emergence of the NC, either via optimality and/or loss landscape analysis \citep{wojtowytsch2020emergence, lu2020neural, zhou2022optimization}, or via the study of gradient-based optimization \citep{ji2021unconstrained, han2021neural, wang2022linear}. 
Some papers also focus on the generalized class-imbalanced setting, where the NC does not emerge in its original formulation \citep{thrampoulidis2022imbalance, fang2021exploring, hong2023neural}. Deviating from the 
strong assumptions of the UFM model, a line of work focuses on gradient descent in homogeneous networks \citep{poggio2020explicit, xu2023dynamics, kunin2022asymmetric},
while others introduce perturbations \citep{tirer2022perturbation}. 

\vspace{-0.2cm}
\paragraph{AGOP feature learning.} The NFA was shown to capture many aspects of feature learning in neural networks including improved performance and interesting structural properties. The initial work on the NFA connects it to the emergence of spurious features and simplicity bias, why pruning DNNs may increase performance, the ``lottery ticket hypothesis'', and a mechanism for grokking in vision settings \citep{AGOPScience}. \citet{CatapultsAGOP} connect large learning rates and catapults in neural network training to better alignment of the AGOP with the true features. \citet{RadhakrishnanLinearAGOP} demonstrate that the AGOP recovers the implicit bias of deep linear networks toward low-rank solutions in matrix completion. \citet{BeagleholeCenteredNFA} identify that the NFA is characterized by alignment of the weights to the pre-activation tangent kernel. 

\section{Background and definitions}
\vspace{-0.1cm}
\subsection{Notation}
For simplicity of description, we assume a class-balanced setting, where $N=Kn$ is the total number of training samples, with $K$ being the number of classes and $n$ the number of samples per class. Note however that our experimental results and asymptotic theory hold in the general case that the classes are of unequal size. We will in general order the training samples such that the samples of the same class are grouped into blocks. With this ordering, the labels $y \in \mathbb{R}^{K\times N}$ can be written as $I_K \otimes \mathbf{1}_n^\top,$ where $I_K$ denotes a $K\times K$ identity matrix, $\otimes$ denotes the Kronecker product and $\mathbf{1}_n$ a row vector of all-ones of length $n$. 

For a matrix $A \in \Real^{d_1 \times d_2}$ and a column vector $v \in \Real^{d_1 \times 1}$, the operation $A \odiv v$, divides all elements of each row of $A$ by the corresponding element of $v$. We define the norm of $A \in \Real^{d \times n}$, $\|A\| \in \Real^{n \times 1}$, as the column-wise $\ell_2$-norm and not the matrix norm. 

Both a DNN and Deep RFM of depth $L$ can be written as:
\begin{align*}
f(x)=m_{L+1}\sigma(m_{L}\sigma(m_{L-1}\dots \sigma(m_1(x)) \dots )),
\end{align*}
where $m_l$ is an affine map and $\sigma$ is a non-linear activation function. For neural networks, the linear map of $m_l$ is the application of a single weight matrix $W_l$, while for Deep RFM the linear transformation is a product of a weight matrix and a feature matrix $M^{1/2}_l$, written $W_l M_l^{1/2}$.
The training data $X \in \mathbb{R}^{d_1\times N}$ is stacked into columns and we let $X_l$ be the feature representations of the training data after $l$ layers of a DNN or Deep RFM before the linear layer for $l\ge1$, with $X_1 = X$. 

\subsection{Average gradient outer product}

The AGOP operator acts with respect to a dataset $X \in \Real^{d_0 \times N}$ and any model, that accepts inputs from the data domain $f : \Real^{d_0 \times 1} \rightarrow \Real^{K \times 1}$, where $K$ is the number of outputs. Writing the (transposed) Jacobian of the model outputs with respect to its inputs as $\dfdx{f(x)}{x} \in \Real^{d_0 \times K}$, the AGOP is defined as:
\begin{align}
    \AGOP(f, X) \triangleq \frac{1}{N} \sum_{c=1}^K \sum_{i=1}^N \dfdx{f(x_{ci})}{x} \dfdx{f(x_{ci})}{x}\tran~.
\end{align}
Note while the AGOP is stated such that the derivative is with respect to the immediate model inputs $x$, we will also consider the AGOP where derivatives are taken with respect to intermediate representations of the model. This object has important implications for learning because the AGOP of a learned predictor will (with surprisingly few samples) resemble the \textit{expected} gradient outer product (EGOP) of the target function \citep{HsuEGOP}. While the AGOP is specific to a model and training data, the EGOP is determined by the population data distribution and the function to be estimated, and contains specific useful information such as low-rank structure, that improves prediction \citep{SamoryEGOP}.

Remarkably, it was identified in \citet{AGOPScience, beaglehole2023mechanism} that neural networks will automatically contain AGOP structure in the Gram matrix of the weights in all layers of the neural network, where the AGOP at each layer acts on the sub-network and feature vectors at that layer. Stated formally, the authors observe and pose the Neural Feature Ansatz (NFA):
\begin{ansatz}[Neural Feature Ansatz \citep{AGOPScience}]
Let $f$ be a depth-$L$ neural network trained on data $X$ using stochastic gradient descent. Then, for all layers $l \in [L]$,
\begin{align}
    \rho\round{W_l\tran W_l, \frac{1}{N} \sum_{c=1}^K \sum_{i=1}^n \dfdx{f(x_{ci})}{x^l} \dfdx{f(x_{ci})}{x^l} \tran} \approx 1.
\end{align}
\end{ansatz}
The second argument is the AGOP of the neural network where the gradients are taken with respect to the activations at layer $l$ and not the initial inputs. The correlation function $\rho$ accepts two matrix inputs of shape $(p,q)$ and returns the cosine similarity of the matrices flattened into vectors of length $pq$, whose value is in $[-1,1]$. Note \citet{AGOPScience} formulates this similarity as proportionality, which is equivalent to correlation exactly equal to $1$.

Crucially, the AGOP can be defined for any differentiable estimator, not necessarily a neural network. 
In \citet{AGOPScience}, the authors introduce a kernel learning algorithm, the Recursive Feature Machine (RFM), that performs kernel regression and estimates the AGOP of the kernel machine in an alternating fashion, enabling recursive feature learning and refinement through AGOP.

\subsection{Deep RFM} Subsequent work introduced the Deep Recursive Feature Machine (Deep RFM) to model deep feature learning in neural networks \citep{beaglehole2023mechanism}.
Deep RFM iteratively generates representations by mapping the input to that layer with the AGOP of the model w.r.t.\ this input, and then applying an untrained random feature map. To define the Deep RFM, let $\{k_l\}_{l=1}^{L+1}: \mathbb{R}^{d_l\times d_l} \xrightarrow[]{} \mathbb{R}$ be a set of kernels. For the ease of notation, we will write $k_l(X'_l, X_l)$ for a matrix of kernel evaluations between columns of $X'_l$ and $X_l.$ We then describe Deep RFM in Algorithm~\ref{alg:Deep RFM}. 
\begin{algorithm}
\caption{Deep Recursive Feature Machine (Deep RFM)}\label{alg:Deep RFM}
\begin{algorithmic}
\INPUT $X_1, Y, \{k_l\}_{l=1}^{L+1}, L, \{\Phi_l\}_{l=1}^{L}$
\COMMENT{kernels: $\{k_l\}_l$, depth: $L$, feature maps: $\{\Phi_l\}_l$, ridge: $\gamma$}
\OUTPUT $\alpha_{L+1}, \{M_l\}_{l=1}^{L}$
\FOR{$l \in 1,\dots, L$}
    \STATE Normalize the data, $X_l \xleftarrow[]{} X_l \odiv \norm{X_l}$
    \STATE Learn coefficients, $\alpha_l =Y(k_l(X_l, X_l)+\gamma I)^{-1}$.
    \STATE Construct predictor, $f_l(\cdot) = \alpha_l k_l(X_l, \cdot)$.
    \STATE Compute AGOP: $M_l = \sum_{c,i=1}^{K,n} \dfdx{f_l(x^l_{ci})}{x^l} \dfdx{f_l(x^l_{ci})}{x^l}\tran$.
    \STATE Transform the data $X_{l+1} \xleftarrow[]{} \Phi_l(M_l^{1/2}X_l)$.
\ENDFOR
\STATE Normalize the data, $X_{L+1} \xleftarrow[]{} X_{L+1} \odiv {\norm{X_{L+1}}}$
\STATE Learn coefficients, \\$\alpha_{L+1} =Y(k_{L+1}(X_{L+1}, X_{L+1})+\gamma I)^{-1}$.
\end{algorithmic}
\end{algorithm}

Note that the Deep RFM as defined here considers only one AGOP estimation step in the inner-optimization of RFM, while multiple iterations are used in \citet{beaglehole2023mechanism}. The high-dimensional feature maps $\Phi_l(\cdot)$ are usually realized as $\sigma(W_l \cdot +\, b_l),$ where $W_l$ is a matrix with standard Gaussian or uniform entries, $b_l$ is an optional bias term initialized uniformly at random, and $\sigma$ is the ReLU or cosine activation function. Thus, $\Phi_l$ typically serves as a random features generator.

The single loop in the Deep RFM represents a reduced RFM learning process. The RFM itself is based on kernel ridge regression, therefore we introduce the ridge parameter $\gamma.$

\subsection{Deep Neural Collapse}
We define the feature vectors $x^l_{ci}$ of the $i$-th sample of the $c$-th class as the input to $m_l$. For neural networks, we define $\Tilde{x}^l_{ci}$ as the feature vectors produced from $m_{l-1}$ before the application of $\sigma$, where $l\geq2$. For Deep RFM, we define $\Tilde{x}^l_{ci}$ as the feature vectors produced by the application of AGOP, i.e. $\Tilde{x}^l_{ci} = M^{1/2}_l x^l_{ci}$, again where $l\geq2$. In both model types, we let $\wt{x}^1_{ci} = x^1_{ci}$. Let $\mu_c^l:=\frac{1}{n}\sum_{i=1}^n x_{ci}^l$, $\Tilde{\mu}_c^l:=\frac{1}{n}\sum_{i=1}^n \tilde{x}_{ci}^l$ be the corresponding class means, and 
$\mu^l, \Tilde{\mu}^l \in \mathbb{R}^{d_l \times K}$ 
the matrices of class means. 
NC1 is defined in terms of the \emph{within-class} and \emph{between-class} variability at layer $l$: \begin{align*}
    \Sigma_W^l = \frac{1}{N}\sum_{c=1}^K \sum_{i=1}^n (x_{ci}^l-\mu^l_c)(x_{ci}^l-\mu^l_c)^\top, \qquad \Sigma_B^l = \frac{1}{K}\sum_{c=1}^K (\mu^l_c-\mu^l_G)(\mu^l_c-\mu^l_G)^\top~.
\end{align*}
\begin{definition}
In the context of this work, NC is achieved at layer $l$ if the following two properties are satisfied: 
\begin{itemize}[leftmargin=10pt]
    \item \textbf{DNC1:} The within-class variability at layer $l$ is $0$. This property can be stated for the features either after or before the application of the activation function $\sigma$. In the former case, the condition requires $x^l_{ci}=x^l_{cj}$ for all $i, j\in \{1, \ldots, n\} \triangleq [n]$ (or, in matrix notation, $X_l=\mu^l \otimes \mathbf{1}_n^\top$); in the latter, $\Tilde{x}^l_{ci}=\Tilde{x}^l_{cj}$ for all $i, j\in [n]$ (or, in matrix notation, $\Tilde{X}_l=\Tilde{\mu}^l \otimes \mathbf{1}_n^\top$).
    \item \textbf{DNC2:} The class-means at the $l$-th layer form either an orthogonal basis or a simplex equiangular tight frame (ETF). As for DNC1, this property can be stated for features after or before $\sigma$. We write the ETF class-mean covariance $\ETFmat \triangleq (1 + \frac{1}{K-1}) I_K - \frac{1}{K-1}\mathbf{1}_K\mathbf{1}_K^\top.$ In the former case, the condition requires either $(\mu^l)\tran \mu^l \propto I_K$ or $(\mu^l)\tran \mu^l \propto \ETFmat$; in the latter $(\Tilde{\mu}^l)\tran \Tilde{\mu}^l \propto I_K$ or $(\Tilde{\mu}^l)\tran \Tilde{\mu}^l \propto \ETFmat.$ In this work, we will measure this condition on the centered and normalized class means, $\bar{\mu}^l$. We let $\bar{\mu}^l = \round{\mu^l - \mu^l_G} \odiv \round{\mu^l - \mu^l_G}$ or $\bar{\mu}^l = \round{\wt{\mu}^l - \wt{\mu}^l_G} \odiv \round{\wt{\mu}^l - \wt{\mu}^l_G}$, depending on whether we measure collapse on $X$ or $\wt{X}$, respectively.
\end{itemize}
\end{definition}

\section{Average gradient outer product induces DNC in Deep RFM}
\label{sec: deep rfm}

\begin{figure*}
    \centering
    \includegraphics[scale=0.45]{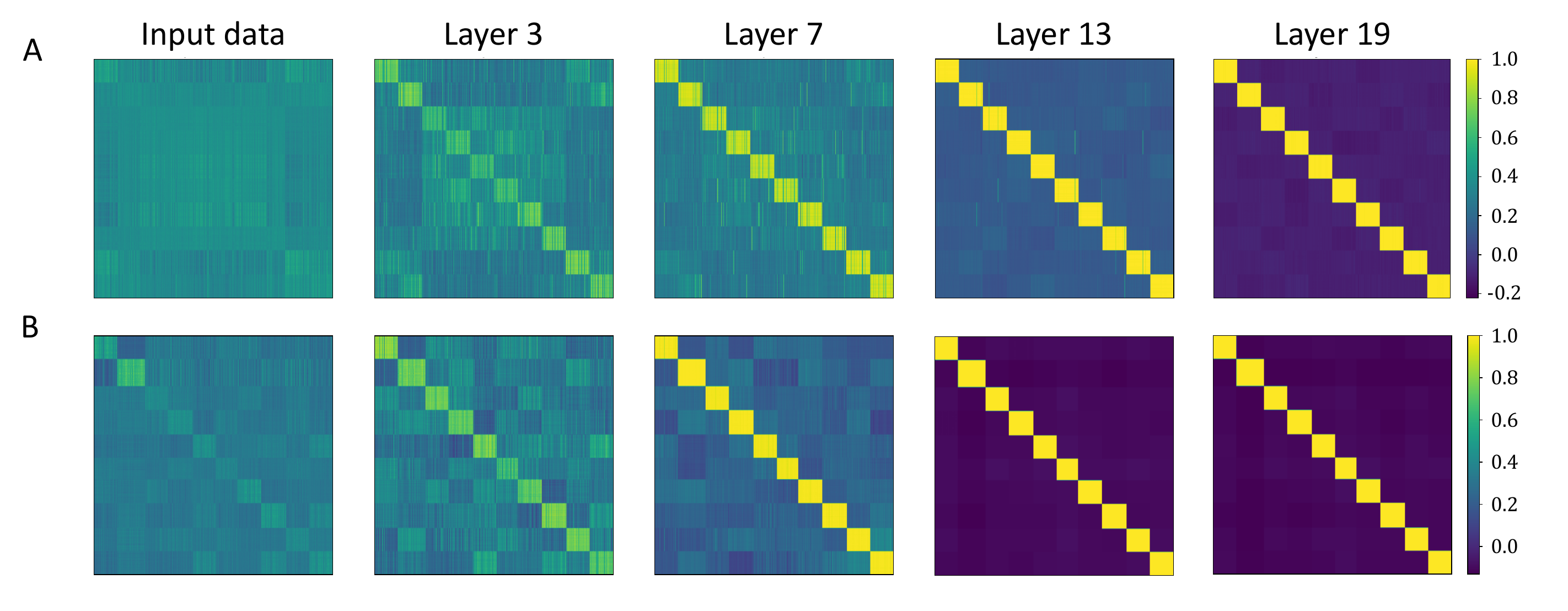}
    \caption{Neural collapse with Deep RFM on (A) CIFAR-10 and (B) MNIST. The matrix of inner products of all pairs of points in $X_{l}$ extracted from layers $l \in \{1, 3, 7, 13, 19\}$ of Deep RFM. The columns show the Gram matrices of feature vectors transformed by the AGOP from Deep RFM, $\round{\wt{X}_l - \wt{\mu}_G^l} \odiv \|\wt{X}_l - \wt{\mu}_G^l\|$. The data are ordered so that points of the same class are adjacent to one another, arranged from classes $1$ to $10$.  Deep RFM uses non-linearity $\sigma(\cdot)=\cos(\cdot)$ in (A) and $\sigma(\cdot) = \ReLU(\cdot)$ in (B).}
    \label{fig: deep rfm neural collapse, visualization}
\end{figure*}

We demonstrate that AGOP is a mechanism for DNC in the Deep RFM model. In this section, we provide an extensive empirical demonstration that Deep RFM exhibits DNC - NC1 and NC2 - and that the induction of neural collapse is due to the application of AGOP.

We visualize the formation of DNC in the input vectors of each layer of Deep RFM. For $l\geq2$, let $\wt{X}_{l} \equiv M^{1/2}_{l-1}X_{l-1}$ be the feature vectors at layer $l-1$ projected onto the square root of the AGOP, $M^{1/2}_{l-1}$, at that layer. Otherwise, we define $\wt{X}_1 = X_1$ as the untransformed representations. In Figure~\ref{fig: deep rfm neural collapse, visualization}, we show the Gram matrix, $\bar{X}_l^\top \bar{X}_l$, of centered and normalized feature vectors $\bar{X}_l = \round{\wt{X}_l - \wt{\mu}_G^l} \odiv \|\wt{X}_l - \wt{\mu}_G^l\|$ extracted from several layers of Deep RFM, as presented in Algorithm~\ref{alg:Deep RFM}, trained on CIFAR-10 and MNIST (see Appendix~\ref{appB:experiments} for similar results on SVHN). Here the global mean $\wt{\mu}_G^l$ is subtracted from each column of $\wt{X}$. After 18 layers of DeepRFM, the final Gram matrix is approximately equal to that of collapsed data, in which all points are at exactly their class means, and the centered class means form an ETF. Specifically, all centered points of the same class have inner product $1$, while pairs of different class have inner product $-(K-1)^{-1}$. Note that, like standard neural networks, Deep RFM exhibits DNC even when the classes are highly imbalanced, such as for the SVHN dataset (Figure~\ref{fig: deep rfm neural collapse vis, RFF}B in Appendix~\ref{appB:experiments}).

We show collapse occurring as a consequence of projection onto the AGOP, across all datasets and across choices of feature map (Figures~\ref{fig: deep rfm neural collapse, ReLU} and \ref{fig: deep rfm neural collapse, RFF} in Appendix~\ref{appB:experiments}). We observe that the improvement in NC1 is entirely due to $M^{1/2}_l$, and the random feature map actually worsens the NC1 value. This result is intuitive as Deep RFM deviates from a simple deep random feature model just by additionally projecting onto the AGOP with respect to the input at each layer, and we do not expect random feature maps to induce neural collapse on their own. In fact, the random feature map will provably separate nearby datapoints, as this map is equivalent to applying a rapidly decaying non-linear function to the inner products between pairs of points (described formally in Appendix~\ref{sec: rf, no collapse}).

\section{Theoretical explanations for DNC in Deep RFM}
\label{sec: deep rfm theory}

We have established empirically in a range of settings that Deep RFM induces DNC through AGOP. We now prove that DNC in Deep RFM (1) occurs in an asymptotic setting, and (2) can be viewed as an implicit bias of RFM as a kernel learning algorithm.

\subsection{Asymptotic analysis}
\label{sec:asymptotic}
Many works have derived that, under mild conditions on the data and kernel, non-linear kernel matrices of high dimensional data are well approximated by linear kernels with a non-linear perturbation \citep{karoui2010spectrum, AdlamLinearize1, YueUniversality}. In this section, we provide a proof that Deep RFM will induce DNC when the predictor kernels $\{k_l\}_l$ and random feature maps $\{\Phi_l\}_l$ satisfy this property. In particular, in the high-dimensional setting in these works, non-linear kernel matrices, written $k(X)\equiv k(X,X)$ for simplicity, are well approximated by
\begin{align*}
    k(X) \approx \gamma \mathbf{1}\mathbf{1}\tran + X\tran X + \lambda I,
\end{align*}
where $\lambda \geq 0$ is the \textit{perturbation} parameter. We associate with Deep RFM two separate (unrestricted) centered kernels $\klin$ and $\kmap$, with perturbation parameters $\lambdalin$ and $\lambdamap$, corresponding to the predictor kernels and random feature maps respectively. Following \citet{AdlamLinearize2}, for additional simplicity we consider centered kernels, where $\gamma=0$. These kernels are defined to have the form $\klin = X\tran X + \lambdalin I$ and $\kmap = X\tran X + \lambdamap I$.

We will show that Deep RFM exhibits exponential convergence to DNC, and the convergence rate depends on the ratio 
$\lambdalin/\lambdamap$. These two parameters modulate the distance of non-linear ridge regression with each kernel to linear regression, and therefore the extent of DNC with Deep RFM. Namely, as we will show, if $\klin$ is close to the linear kernel, i.e., if $\lambdalin$
is small, then the predictor in each layer resembles interpolating linear regression, an ideal setting for collapse through the AGOP. On the other hand, if $\kmap$
is close to the linear kernel, i.e., if $\lambdamap$
is small, then the data is not \textit{easily} linearly separable. In that case, the predictor will be significantly non-linear in order to interpolate the labels, deviating from the ideal setting. We proceed by explaining specifically where $\klin$ and $\kmap$ appear in Deep RFM and why linear interpolation induces collapse. We then conclude with the statement of our main theorem and an explanation of its proof.

We describe the Deep RFM iteration we analyze here, which has a slight modification. Instead of normalizing the data at each iteration, we scale the data by a value $\kappa^{-1}$ at each iteration. This modification is sufficient to control the norms of the data, and is easier to analyze. The recursive procedure begins with a dataset $X_l$ at layer $l$ in Deep RFM, and constructs $\Tilde{X}_l$ from the AGOP $M_l$ of the kernel ridgeless regression solution with kernel $k_l = \klin$. After transforming with the AGOP and scaling, the data Gram matrix at layer $l$ is transformed to,
\begin{align}
    \tilde{X}_{l+1}\tran \tilde{X}_{l+1} = \kappa^{-1} X_l\tran M_l X_l, 
\end{align}
for $\kappa = 1 - 2 \lambdalin (1 + \lambdamap^{-1}).$ Then, we will apply a non-linear feature map $\phimap$ corresponding to a kernel $\kmap$, and write the corresponding Gram matrix at depth $l+1$ as
\begin{align}
    X_{l+1}\tran X_{l+1} &= \phimap(\tilde{X}_{l+1})\tran \phimap(\tilde{X}_{l+1}) = \kmap(\tilde{X}_{l+1}) = \tilde{X}_{l+1}\tran \tilde{X}_{l+1} + \lambdamap I~.
\end{align}
Intuitively, the ratio $\lambdalin / \lambdamap$ determines how close to linear $\klin$ is when applied to the dataset, and therefore the rate of NC formation.

We now explain why Deep RFM with a linear kernel is ideal for NC, inducing NC in just one AGOP application. To see this, note that ridgeless regression with a linear kernel is exactly least-squares regression on one-hot encoded labels. In the high-dimensional setting we consider, should we find a linear solution $f(x) = \beta\tran x$ that interpolates the labels, the AGOP of $f$ is $\beta\beta\tran$. Since we interpolate the data, $\beta\tran x = y$ for all $(x,y)$ input/label pairs, applying the AGOP collapses the data to $\beta\beta\tran x = \beta y$. Therefore, NC will occur in a single layer of Deep RFM when $\lambdamap \gg \lambdalin$, in which case the data Gram matrix has a large minimum eigenvalue relative to the identity perturbation to $\klin$, so the kernel regression is effectively linear regression. Note that this theory offers an explanation why a non-linear activation is needed for DNC in neural networks: $\lambdamap=0$ when $\phimap$ is linear, preventing this ideal setting. 

We prove that DNC occurs in the following full-rank, high-dimensional setting. 

\begin{assumption}[Data is high dimensional]
\label{assumption: high dimensions}
    We assume that the data has dimension $d \geq n$.
\end{assumption}

\begin{assumption}[Data is full rank]
\label{assumption: full rank data}
    We assume that the initial data Gram matrix $X_1\tran X_1$ has minimum eigenvalue at least $\lambda_\phi > 0$.
\end{assumption}

The assumptions that the Gram matrix of the data is full-rank and high-dimensional is needed only if one requires neural collapse in every layer of Deep RFM, starting from the very first one. In contrast, if we consider collapse starting at any given later layer of Deep RFM, then we only need that the smallest eigenvalue of the corresponding feature map is bounded away from 0. This in turn requires that the number of features at that layer is greater than the number of data points, a condition which is routinely satisfied by the overparameterized neural networks used in practice.

We present our main theorem (see Appendix~\ref{appA:proofs} for its proof). 

\begin{theorem}[Deep RFM exhibits neural collapse]
\label{prop:DeepRFM_NC}
Suppose we apply Deep RFM on any dataset $X$ with labels $Y \in \Real^{N \times K}$ choosing all $\{\Phi_l\}_l$ and $\{k_l\}_l$ as the feature map $\phimap$ and kernel $\klin$ above, with no ridge parameter ($\gamma = 0$). Then, there exists a universal constant $C>0$, such that for any $0 < \epsilon \leq 1$, provided $\lambdalin \leq \frac{C \lambdamap}{2(1 + \lambdamap^{-1}) n}(1 - \epsilon)$, and for all layers $l \in \{2, \ldots, L\}$ of Deep RFM,
\begin{align*}
    \|\tilde{X}_{l+1}\tran \tilde{X}_{l+1} - Y\tran Y\| \leq (1 - \epsilon) \|\tilde{X}_{l}\tran \tilde{X}_{l} - Y\tran Y\| + O(\lambdalin^2 \lambdamap^{-2}).
\end{align*}
\end{theorem}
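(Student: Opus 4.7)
The plan is to write the Deep RFM update as an explicit rational matrix function of the data Gram matrix $G_l = X_l^\top X_l$ and then analyze the resulting map as a contraction toward the collapsed target. First, I would use the fact that ridgeless kernel regression with the linearized kernel $\klin$ is effectively linear: the predictor $f(x) = Y(G_l + \lambdalin I)^{-1} X_l^\top x$ has constant Jacobian, so the AGOP equals $X_l(G_l + \lambdalin I)^{-1} Y^\top Y (G_l + \lambdalin I)^{-1} X_l^\top$. Propagating through the iteration yields the key identity
\[
\tilde{X}_{l+1}^\top \tilde{X}_{l+1} = \kappa^{-1}\, H_l\, Y^\top Y\, H_l, \qquad H_l \;=\; G_l(G_l + \lambdalin I)^{-1} \;=\; I - \lambdalin(G_l + \lambdalin I)^{-1}.
\]
For $l \geq 2$, the feature-map step ensures $G_l = \tilde{X}_l^\top \tilde{X}_l + \lambdamap I \succeq \lambdamap I$, which provides the uniform bound $\|H_l - I\| \leq \lambdalin/(\lambdamap + \lambdalin)$.

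Next, I would perturb around the collapsed target. Let $P = Y^\top Y/n$ be the rank-$K$ class-mean projector, and set $\Delta_l = \tilde{X}_l^\top \tilde{X}_l - Y^\top Y$. Decomposing $G_l + \lambdalin I = D_0 + \Delta_l$ with $D_0 = (n+\lambdamap+\lambdalin)P + (\lambdamap+\lambdalin)P^\perp$ (the value at $\Delta_l = 0$), a Neumann expansion yields
\[
(G_l + \lambdalin I)^{-1} = D_0^{-1} - D_0^{-1} \Delta_l D_0^{-1} + O\bigl(\|\Delta_l\|^2 \lambdamap^{-3}\bigr).
\]
Substituting into the key identity and collecting terms produces the one-step relation $\Delta_{l+1} = n(\kappa^{-1} r^2 - 1) P + \mathcal{L}(\Delta_l) + O(\|\Delta_l\|^2)$, where $r = (n+\lambdamap)/(n+\lambdamap+\lambdalin)$ and $\mathcal{L}$ is an explicit linear operator in $\Delta_l$ built from $D_0^{-1}$ and $P$.

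It then remains to identify the residual and the contraction. The zeroth-order residual $n(\kappa^{-1} r^2 - 1) P$ is reduced to $O(\lambdalin^2 \lambdamap^{-2})$ by the prescribed choice of $\kappa$, via a Taylor expansion that cancels the leading $O(\lambdalin)$ correction. To control $\mathcal{L}$, I would decompose $\Delta_l$ into its four $(P, P^\perp)$-blocks: because $D_0^{-1}$ commutes with $P$ but $\Delta_l$ generally does not, each block is scaled differently, with the two relevant denominators being $1/(n+\lambdamap+\lambdalin)$ (on the $P$-space) and $1/(\lambdamap+\lambdalin)$ (on the $P^\perp$-space). The slowest-contracting block is the cross block $P^\perp \Delta_l P$, whose coefficient $\approx \lambdalin n/\bigl((n+\lambdamap+\lambdalin)(\lambdamap+\lambdalin)\bigr)$ matches the threshold in the theorem when set $\leq 1 - \epsilon$. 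The quadratic error $O(\|\Delta_l\|^2)$ is absorbed inductively as long as $\|\Delta_l\|$ stays within the Neumann convergence radius.

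The main technical obstacle is the non-commutativity of $\Delta_l$ with $P$: each of the four blocks gets mapped by $\mathcal{L}$ through different combinations of the two denominators, and the contraction coefficient must be obtained by aggregating these into a single operator-norm estimate. Simultaneously, the Taylor cancellation in the zeroth-order residual—engineered by the precise choice of $\kappa$—requires matching terms up to second order in $\lambdalin/\lambdamap$. Verifying both pieces of bookkeeping consistently, while closing the induction on $\|\Delta_l\|$, is where the proof spends most of its effort.
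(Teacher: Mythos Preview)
Your proposal is correct and follows essentially the same route as the paper: derive the identity $\tilde X_{l+1}^\top \tilde X_{l+1}=\kappa^{-1}H_l\,Y^\top Y\,H_l$ with $H_l=I-\lambdalin(G_l+\lambdalin I)^{-1}$, linearize around the collapsed Gram matrix, and read off a contraction in $\Delta_l$. The paper organizes the perturbation a bit differently and more economically. It first expands $H_l=I-\lambdalin A^{-1}+O((\lambdalin/\lambdamap)^2)$ using only the spectral lower bound $A:=G_l\succeq\lambdamap I$, and then perturbs $A^{-1}$ around $(A^*)^{-1}=(Y^\top Y+\lambdamap I)^{-1}$ via Woodbury, bounding all remainders by operator norm; no $(P,P^\perp)$-block decomposition is carried out. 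This two-stage split has a concrete payoff for the base case you flag as an obstacle: because the first expansion needs only $G_l\succeq\lambdamap I$ (not $\|\Delta_l\|$ small), one immediately gets $\|\Delta_{2}\|\lesssim \lambdalin\lambdamap^{-1}n$ from the raw bound $\|H_1-I\|\le\lambdalin/\lambdamap$, which lands inside the convergence radius and starts the induction without a separate argument. Your block analysis is more informative about which component of $\Delta_l$ governs the rate, but it is extra bookkeeping beyond what the stated bound requires.
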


This theorem immediately implies exponential convergence to NC1 and NC2 up to error $O(L \lambdalin^2 \lambdamap^{-2})$, a small value determined by the parameters of the problem. As a validation of our theory, we see that this exponential improvement in the DNC metric occurs in all layers (see Figures~\ref{fig: deep rfm neural collapse, ReLU} and \ref{fig: deep rfm neural collapse, RFF}). Note the exponential rate predicted by Theorem~\ref{prop:DeepRFM_NC} and observed empirically for Deep RFM is consistent with the exponential rate observed in deep neural networks \citep{he2022law}.

The proof for this theorem is roughly as follows. Recall that, by our argument earlier in this section, a linear kernel will cause collapse within just one layer of Deep RFM. However, in the more general case we consider, a small non-linear deviation from a linear kernel, $\lambdalin$, is introduced. Beginning with the first layer, partial collapse occurs if the ratio $\lambdalin/\lambdamap^{-1}$ is sufficiently small. Following the partial collapse, in subsequent layers, the data Gram matrix will sufficiently resemble the collapsed matrix, so that the non-linear kernel solution on the collapsed data will behave like the linear solution, leading to further convergence to the collapsed data Gram matrix, a fixed point of the Deep RFM iteration (Lemma~\ref{lemma: A-star inverse}).

\subsection{Connection to parametrized kernel ridge regression} \label{ssec:nonasymptotics}

Next, we demonstrate that DNC emerges in parameterized kernel ridge regression (KRR). In fact, DNC arises as a natural consequence of minimizing the norm of the predictor jointly over the choice of kernel function and the regression coefficients. This result proves that DNC is an implicit bias of kernel learning. We connect this result to Deep RFM by providing intuition that the kernel learned through RFM effectively minimizes the parametrized KRR objective and, as a consequence, RFM learns a kernel matrix that is biased towards the collapsed Gram matrix $Y\tran Y$.

We define the parametrized KRR problem. Consider positive semi-definite kernels (p.s.d.) $k_M : \Real^d \times \Real^d \rightarrow \Real$ of form $k_M(x, z)=\phi(\norm{x-z}_M),$ where $\norm{x-z}_M=\sqrt{(x-z)^\top M(x-z)}$, $\phi$ is 
a strictly decreasing, strictly positive univariate function s.t.\ $\phi(0)=1$, and $M$ is a p.s.d. matrix. This class of kernels covers a wide range of kernels including the Gaussian and Laplace we use in our experiments. Let $\mathcal{H}_M$ be the Reproducing Kernel Hilbert Space (RKHS) corresponding to our chosen kernel $k_M$ where functions $f \in \mathcal{H}_M$ map to a single output class. 
The parametrized kernel ridge regression for a single output class ($K=1$) with ridge parameter $\gamma$ corresponds to the following optimization problem over such $f$ and $M$ on dataset $X$ and labels $Y \in \Real^{K \times N}$:
\begin{align}\label{eq:pkrr_multiclass}
    \underset{f, M}{\infm} \frac{1}{2}\norm{f(X)-Y}^2+\frac{\gamma}{2}\norm{f}^2_{\mathcal{H}_M}.
\end{align}
When we predict outputs over $K>1$ classes, we jointly optimize over $K$ independent scalar-valued $f$, one for each output class, and, again, a single choice of matrix $M.$ The objective function is then sum the individual objective values over all classes. Our modification for multiple outputs corresponds to how Deep RFM and kernel ridge regression, more generally, are solved in practice - a single kernel matrix is used across all classes while kernel coefficients are computed for each class independently. Note this modification for multiple classes is also equivalent to optimizing over a single vector-valued function where the function space $\mathcal{H}_M$ is a particular vector-valued RKHS (see Appendix~\ref{app: parametrized krr results} for more details).

Parameterized KRR differs from standard KRR by additionally optimizing over the choice of the kernel through the matrix $M$. For all $M$, including the optimal one for Problem~\eqref{eq:pkrr_multiclass}, an analog of the representer theorem for vector-valued RKHS can be shown \citep{micchelli2004kernels} and the optimal solution to \eqref{eq:pkrr_multiclass} can be written as ${f(z)=\sum_{c,i=1,1}^{K,n} \alpha_{ci} k_{M}(x_{ci}, z),}$ where $\alpha_{ci}$ is a $K$-dimensional vector. Let us therefore denote $A$ the matrix of stacked columns $\alpha_{ci}.$ We can re-formulate Problem~\eqref{eq:pkrr_multiclass} as the following finite-dimensional optimization problem:
\begin{align}\label{eq:finite_dim_pkrr_multiclass}
    \underset{A, M}{\infm}\, \tr\left((Y-Ak_M)(Y-Ak_M)^T\right)+\mu\tr(k_MA^TA),
\end{align}
where, abusing notation, $k_M:=k_M(X, X)$ is the matrix of pair-wise kernel evaluations on the data. 

We now relax the optimization over matrices $M$ by optimizing over all p.s.d., entry-wise non-negative kernel matrices $k \in \Real^{nK} \times \Real^{nK}$ with ones on diagonal and compute the optimal value of the following relaxed parametrized KRR minimization: 
\begin{align}
\label{eq:finite_dim_pkrr_kernel_multiclass}
\underset{A, k}{\infm}\, \tr\left((Y-Ak)(Y-Ak)^T\right)+\mu\tr(kA^TA).
\end{align}
Optimizing over all p.s.d., entry-wise non-negative $k$ with ones on diagonal is not always equivalent to optimizing over all $M$. Thus, in general, this provides only a relaxation. However, if the data $X$ is full column rank (as in our asymptotic analysis), then the optimizations \eqref{eq:finite_dim_pkrr_multiclass} and \eqref{eq:finite_dim_pkrr_kernel_multiclass} have the same solution and, thus, the relaxation is without loss of generality. We establish this equivalence formally in Appendix~\ref{app: parametrized krr results}.

We are now ready to state our optimality theorem. The proof is deferred to Appendix~\ref{appA:proofs}.
\begin{restatable}{theorem}{nonasymptotic}
\label{thm:parametrized krr}
The unique optimal solution to the (relaxed) parametrized kernel ridge regression objective (Problem~\ref{eq:finite_dim_pkrr_kernel_multiclass}) is the kernel matrix $k^*$ exhibiting neural collapse, $k^*=I_K\otimes(\mathbf{1}_n\mathbf{1}_n^\top)=Y\tran Y.$
\end{restatable}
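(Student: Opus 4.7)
The plan is to eliminate the coefficient matrix $A$ in closed form, reducing the problem to a scalar minimization over $k$ that I then attack with a Cauchy--Schwarz inequality for positive semidefinite matrices combined with the entrywise structural constraints.

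First, I would eliminate $A$. For any feasible $k$ and $\mu>0$, the matrix $k+\mu I$ is positive definite. The objective in \eqref{eq:finite_dim_pkrr_kernel_multiclass} is a convex quadratic in $A$ whose gradient vanishes at $A^\star(k)=Y(k+\mu I)^{-1}$. Substituting back and using the identity $\mu(k+\mu I)^{-1}+k(k+\mu I)^{-1}=I$ to collapse the two quadratic terms, the objective reduces to
$$
 F(k) \;:=\; \mu\,\tr\!\bigl(Y^\top Y\,(k+\mu I)^{-1}\bigr),
$$
so it suffices to show that $F$ is uniquely minimized over the feasible set at $k^\star=Y^\top Y$. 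A direct Sherman--Morrison computation on each block $\mathbf{1}_n\mathbf{1}_n^\top+\mu I_n$ gives $F(Y^\top Y)=\mu nK/(n+\mu)$, which is the target value.

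Next, I would establish a matching lower bound. Decompose $Y^\top Y = n\sum_{c=1}^K v_c v_c^\top$, where $v_c := e_c\otimes \mathbf{1}_n/\sqrt{n}$ is the unit vector supported on class $c$, so that $F(k)=\mu n\sum_c v_c^\top(k+\mu I)^{-1}v_c$. Applying Cauchy--Schwarz to the pair $(k+\mu I)^{1/2} v_c$ and $(k+\mu I)^{-1/2} v_c$ yields, for each $c$,
$$
 v_c^\top(k+\mu I)^{-1}v_c \;\ge\; \frac{1}{v_c^\top(k+\mu I)v_c} \;=\; \frac{1}{v_c^\top k\, v_c+\mu}.
$$
The positive semidefiniteness condition together with unit diagonal forces $|k_{ij}|\le 1$, so $v_c^\top k\, v_c = \tfrac{1}{n}\sum_{i,j\in\text{class }c}k_{ij}\le n$. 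Summing over $c$ yields $F(k)\ge \mu n K/(n+\mu)$, matching the value attained at $k^\star$.

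Finally, I would trace the equality cases for uniqueness. Equality in the Cauchy--Schwarz step requires each $v_c$ to be an eigenvector of $k$, while equality in $v_c^\top k\, v_c\le n$ requires the $c$-th diagonal block of $k$ to be the all-ones matrix. Expanding $k v_c$ under this diagonal-block condition and matching entries to a scalar multiple of $v_c$ on each class $c'\ne c$, the cross-class entries of $k$ are forced to have zero sum against $\mathbf{1}_n$; entrywise non-negativity then promotes this to the vanishing of every cross-class entry. Hence $k = I_K\otimes(\mathbf{1}_n\mathbf{1}_n^\top)=Y^\top Y$. The main obstacle is this last uniqueness step: the lower bound itself is elementary, but upgrading it to a sharp characterization requires simultaneously tracking the eigenvector condition from Cauchy--Schwarz together with the structural constraints, and the entrywise non-negativity of $k$ is essential---without it, the cross-class blocks would only be constrained to have vanishing row-sums against $\mathbf{1}_n$ rather than to vanish entirely.
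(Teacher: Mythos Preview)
Your argument is correct and takes a genuinely different route from the paper. After the common step of eliminating $A$ to reach $F(k)=\mu\,\tr\bigl(Y^\top Y(k+\mu I)^{-1}\bigr)$ (equivalently, the paper's $\sup_k\sum_c Y_c^\top(k+\mu I)^{-1}kY_c$), the paper diagonalizes $k$, expresses the objective as a weighted average of $\lambda_i/(\lambda_i+\mu)$ with weights $\omega_i=\tfrac{1}{K}\sum_c(Y_c^\top v_i)^2$, and then chains a Cauchy--Schwarz bound $\omega_i\le 1/K$, monotonicity of $x\mapsto x/(x+\mu)$, Jensen's inequality (concavity), and the trace constraint $\sum_i\lambda_i=Kn$; strictness is argued via Perron--Frobenius. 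You instead keep the class-indicator vectors $v_c$ fixed and apply the Kantorovich-type Cauchy--Schwarz inequality $v_c^\top(k+\mu I)^{-1}v_c\ge 1/(v_c^\top(k+\mu I)v_c)$, followed by the entrywise bound $k_{ij}\le 1$ on the diagonal blocks. This avoids the eigendecomposition of $k$ entirely and is more elementary. Your uniqueness argument is also cleaner: tracing the equality case of your Cauchy--Schwarz step forces each $v_c$ to be an eigenvector of $k$, and then the entrywise non-negativity constraint kills the off-diagonal blocks directly, whereas the paper's appeal to Perron--Frobenius for strictness implicitly needs irreducibility of $k$, which is not guaranteed on the feasible set (indeed $k^\star$ itself is reducible). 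The trade-off is that your bound $v_c^\top k v_c\le n$ uses only $|k_{ij}|\le 1$ and not the full p.s.d.\ structure, while the paper's Jensen step exploits the global eigenvalue budget $\sum_i\lambda_i=Kn$; both routes land on the same optimal value $\mu nK/(n+\mu)$.
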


Finally, we informally connect this result to RFM. We argue that RFM naturally minimizes the parametrized kernel ridge regression objective (Problem~\ref{eq:pkrr_multiclass}) through learning the matrix $M$. This claim is natural since the RFM is a parametrized kernel regression model, where $M$ is set to be the AGOP. This choice of Mahalanobis matrix should minimize~\eqref{eq:pkrr_multiclass} as AGOP captures task-specific low-rank structure, reducing unnecessary variations of the predictor in irrelevant directions \cite{chen2023kernel}. Then, given any setting of $M$, RFM is conditionally optimal w.r.t.\ the parameter $\alpha$, as this method simply solves the original kernel regression problem for each fixed $M$. Further, as argued in Section~\ref{sec:asymptotic}, choosing $k$ to be a dot product kernel and $M$ to be the AGOP of an interpolating linear classifier implies that $k_M = k^*$, the optimal solution outlined in Theorem~\ref{thm:parametrized krr}.

\section{Within-class variability collapse through AGOP in neural networks} \label{sec: noise cancelling}

Thus far, we have demonstrated that Deep RFM exhibits DNC empirically, and have given theoretical explanations for this phenomenon. Here, we provide evidence that the DNC mechanism of Deep RFM, i.e., the projection onto the AGOP, is responsible for DNC formation in typical neural networks, such as MLPs, VGG, and ResNet trained by SGD with small initialization. We do so by demonstrating that within-class variability collapse occurs predominantly through the multiplication by the right singular structure of the weights. As the NFM, which is determined by the right singular structure of each weight matrix, is highly correlated with the AGOP, our results imply the AGOP is responsible for NC1 progression.

\setlength{\textfloatsep}{15pt}
\begin{figure*}
    \centering
    \includegraphics[scale=0.625]{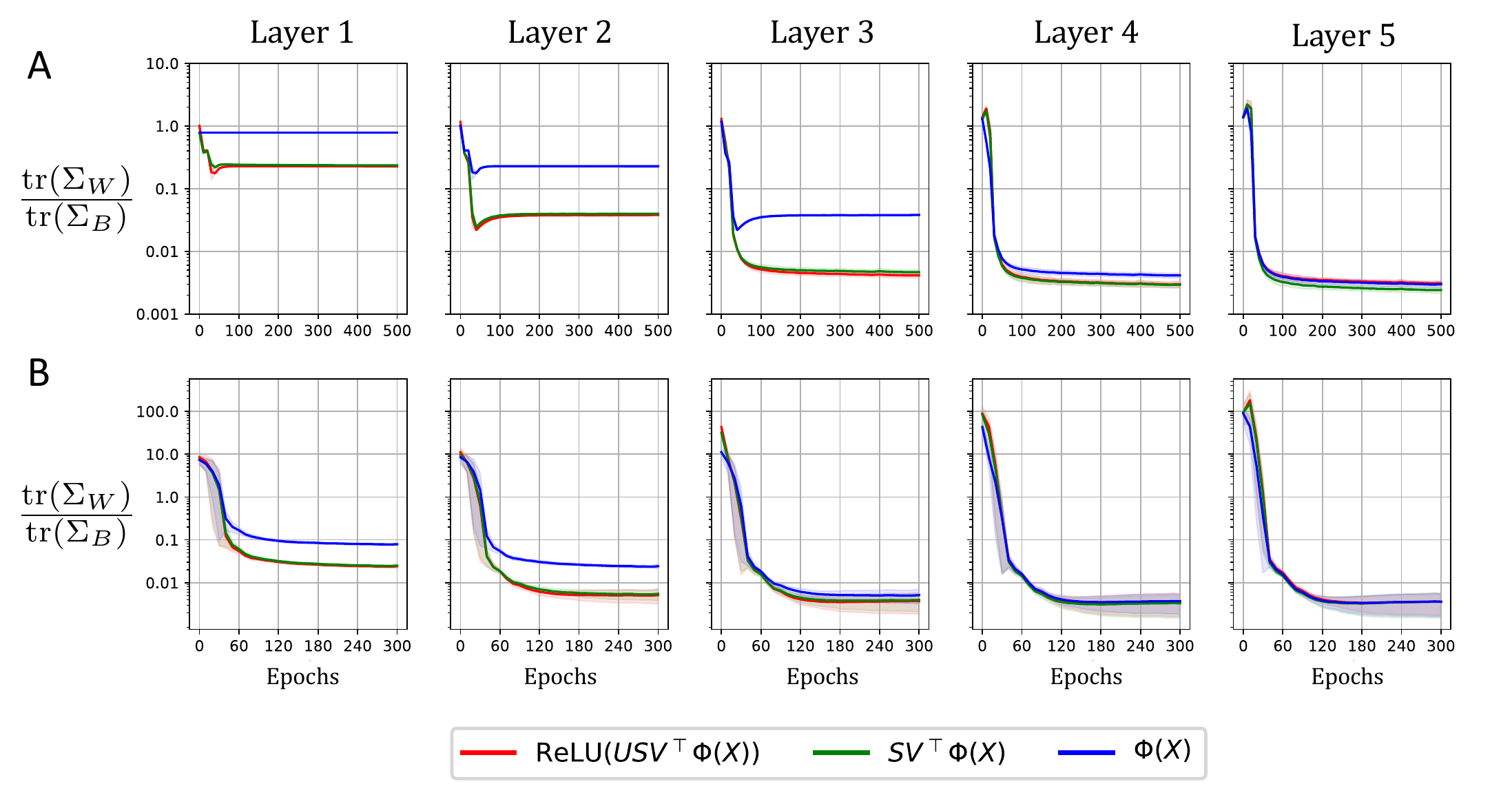}
    \caption{Feature variability collapse from different singular value decomposition components in (A) an MLP on MNIST, and (B) a ResNet on CIFAR-10. We measure the reduction in the NC1 metric throughout training at each of five fully-connected layers. Each layer is decomposed into its input, $\Phi(X)$, the projection onto the right singular space of $W$, $S V\tran \Phi(X)$, and then $U$, the left singular vectors of $W$, and the application of the non-linearity.}
    \label{fig: svd neural collapse, CIFAR}
\end{figure*}

A fully-connected layer $l$ of a neural network consists of two components: multiplication by a weight matrix $W_l$, followed by the application of an element-wise non-linear activation function $\phi$ (termed the non-linearity). Both components of the layer crucially contribute to the inference and training processes of a neural network. The question we address in this section is whether the non-linearity or the weight matrix is primarily responsible for the improvement in DNC1 metrics. 

We additionally decompose the weight matrix into its singular value decomposition $W_l = U_l S_l V_l\tran$, viewing a fully-connected layer as first applying $S_l V_l\tran$, then applying the composition of $\phi$ with $U_l$, $\phi(U_l \cdot)$. 
This decomposition allows us to directly consider the effect of the NFM, $W_l\tran W_l = V_l S^2_l V_l\tran$, and therefore the AGOP, on DNC formation. 

The grouping of a layer into the right singular structure and the non-linearity with the left singular vectors is natural, as
DNC1 metrics computed on this decomposition are identical to the metrics computed using the whole $W_l$ and then the non-linearity. We see this fact by considering the standard DNC1 metric $\tr(\Sigma_W) \tr(\Sigma_B)^{-1}$ \citep{tirer2023perturbation, rangamani2023feature}. A simple computation using the cyclic property of the trace gives ${\tr(\Hat{\Sigma}_W) \tr(\Hat{\Sigma}_B)^{-1}= \tr(U\Sigma_WU^\top) \tr(U\Sigma_BU^\top)^{-1} = \tr(\Sigma_W) \tr(\Sigma_B)^{-1}}$, where $\Sigma$ matrices refer to the within-class variability after the application of $SV^\top,$ while the $\Hat{\Sigma}$ matrices correspond to the output of the full weight matrix. 

While both the non-linearity and $S_lV_l^\top$, and only these two components, can influence the DNC1 metric, we demonstrate that $S_lV_l^\top$ is responsible for directly inducing the majority of within-class variability collapse in neural networks trained by SGD with small initialization. We verify this claim by plotting the DNC1 metrics of all layers of an MLP and ResNet network trained on MNIST and CIFAR-10, respectively (Figure~\ref{fig: svd neural collapse, CIFAR}), where each layer is decomposed into its different parts -- (1) the input to that layer, (2) the embedding after multiplication with $S_lV_l^\top$, and (3) the embedding after multiplication with the left singular vectors $U_l$ and application of the non-linearity $\phi$.

We see that the ratio of within-class to between-class variability decreases mostly between steps (1) and (2), due to the application of the right singular structure. Similar results are in Appendix~\ref{appB:experiments} for all combinations of datasets (MNIST, CIFAR-10, SVHN) and architectures (MLP, VGG, and ResNet). This is a profound insight into the underlying mechanisms for DNC that is of independent interest, especially given we train with a standard algorithm across many combinations of datasets and models. 

We note that, while in this setting the ReLU does not directly reduce NC1, the ReLU is still crucial for ensuring the expressivity of the feature vectors. Without non-linearity, the neural network cannot interpolate the data or perform proper feature learning, necessary conditions for DNC to occur at all.

These results are in a regime where the NFA holds with high correlation, with ResNet and MLP having NFA correlations at the end of training (averaged over all layers and seeds) of $0.74\pm0.17$ and $0.74\pm0.13$ on CIFAR-10 and MNIST, respectively (see Appendix~\ref{appB:experiments} for values across all architectures and datasets). Therefore, as $W_l\tran W_l = V_l S_l^2 V_l\tran$, $S_lV_l^\top$ should project into a similar space as the AGOP. We note that the matrix quantities involved are high-dimensional, and the trivial correlation between two i.i.d. uniform eigenvectors of dimension $d$ concentrates within $\pm O( d^{-1/2})$. For the width $512$ weight matrices considered here, this correlation would be approximately (at most) $0.04$. Therefore, we conclude that the AGOP structure can decrease within-class variability in DNNs.

Unlike DNC1, we have observed a strong DNC2 progression only in the very last layer. This is due to the fact that DNC2 was observed in the related work \citet{rangamani2023feature, parker2023neural} in a regime of large initialization. In contrast, the NFA holds most consistently with small initialization \citep{AGOPScience, BeagleholeCenteredNFA}, and 
in this feature learning regime, the low-rank bias prevents DNC2 \citep{li2020towards, lowrankbias}. 

\section{Conclusion} \label{sec:conclusion}

This work establishes that deep neural collapse can occur through feature learning with the AGOP. We bridge the unsatisfactory gap between DNC and the data -- with previous work mostly only focusing on the very end of the network and ignoring the training data.

We demonstrate that projection onto the AGOP induces deep neural collapse in Deep RFM. We validate that AGOP induces NC in Deep RFM both empirically and theoretically through asymptotic and kernel learning analyses. 

We then provide evidence that the AGOP mechanism of Deep RFM induces DNC in general neural networks. We experimentally show that the DNC1 metric progression through the layers can be mostly due to the linear denoising via the application of the center-right singular structure. Through the NFA, the application of this structure is approximately equivalent to projection onto the AGOP, suggesting that the AGOP directly induces within-class variability collapse in DNNs.

\section*{Acknowledgements}
We acknowledge support from the National Science Foundation (NSF) and the Simons Foundation for the Collaboration on the Theoretical Foundations of Deep Learning through awards DMS-2031883 and \#814639 as well as the  TILOS institute (NSF CCF-2112665). This work used the programs (1) XSEDE (Extreme science and engineering discovery environment)  which is supported by NSF grant numbers ACI-1548562, and (2) ACCESS (Advanced cyberinfrastructure coordination ecosystem: services \& support) which is supported by NSF grants numbers \#2138259, \#2138286, \#2138307, \#2137603, and \#2138296. Specifically, we used the resources from SDSC Expanse GPU compute nodes, and NCSA Delta system, via allocations TG-CIS220009. Marco Mondelli is supported by the 2019 Lopez-Loreta prize. 
We also acknowledge useful feedback from anonymous reviewers.

\bibliographystyle{plainnat}
\bibliography{aux/references}

\newpage
\appendix
\onecolumn

\section{Random features cannot significantly reduce within-class variability}
\label{sec: rf, no collapse}
We state the following proposition (first appearing in \citet{cho2009kernel}) that random features mapping with ReLU will separate distinct data points, especially nearby ones. Therefore, we expect theoretically that NC1 metrics will not improve due to this random feature map.

\begin{restatable}{proposition}{integrals}[\cite{cho2009kernel}]
\label{prop: relu integral}
Let $x, y \in \mathbb{R}^d$ be two fixed vectors of unit length. Let $W \in \mathbb{R}^{D\times d}$ be a weight matrix whose entries are initialized i.i.d. from $\mathcal{N}(0, 2/D).$ Assume $x^\top y=r.$ Then, for $\sigma(\cdot) = \ReLU(\cdot)$,
\begin{align}
\mathbb{E}(\sigma(Wx)^\top \sigma(Wy)) = \frac{1}{\pi}\round{\sin\theta + (\pi - \theta) \cos\theta}~,
\end{align} 
where $\theta = \cos^{-1}r$. Moreover, the variance of this dot product scales with $1/D,$ and the product is sub-exponential, making it well-concentrated.
\end{restatable}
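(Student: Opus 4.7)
The plan is to reduce the statement to the standard arc-cosine kernel identity of Cho and Saul and then handle the concentration separately. Write $w_i^\top \in \mathbb{R}^d$ for the $i$-th row of $W$, so that $w_i \sim \mathcal{N}(0, (2/D) I_d)$ independently across $i$. Then
\begin{equation*}
\sigma(Wx)^\top \sigma(Wy) = \sum_{i=1}^D \sigma(w_i^\top x)\, \sigma(w_i^\top y),
\end{equation*}
and by linearity $\mathbb{E}[\sigma(Wx)^\top \sigma(Wy)] = D\, \mathbb{E}[\sigma(w^\top x)\sigma(w^\top y)]$ for a single row $w \sim \mathcal{N}(0,(2/D)I_d)$. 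Since $\|x\|=\|y\|=1$ and $x^\top y = r$, the pair $(w^\top x, w^\top y)$ is jointly Gaussian with mean zero and covariance $(2/D)\bigl(\begin{smallmatrix} 1 & r \\ r & 1\end{smallmatrix}\bigr)$.

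The next step is to exploit the positive homogeneity of ReLU, $\sigma(\lambda t) = \lambda \sigma(t)$ for $\lambda \ge 0$, to absorb the $2/D$ scaling. Writing $w^\top x = \sqrt{2/D}\,\tilde u$ and $w^\top y = \sqrt{2/D}\,\tilde v$ with $(\tilde u, \tilde v)$ a standard bivariate Gaussian of correlation $r$, we obtain $\sigma(w^\top x)\sigma(w^\top y) = (2/D)\,\sigma(\tilde u)\sigma(\tilde v)$, so that
\begin{equation*}
\mathbb{E}[\sigma(Wx)^\top \sigma(Wy)] = 2\, \mathbb{E}[\sigma(\tilde u)\sigma(\tilde v)].
\end{equation*}
This removes $D$ from the problem completely and reduces the first assertion to the classical identity $\mathbb{E}[\sigma(\tilde u)\sigma(\tilde v)] = \frac{1}{2\pi}\bigl(\sin\theta + (\pi-\theta)\cos\theta\bigr)$ with $\theta = \arccos r$.

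To derive this identity, I would decompose $\tilde v = r\tilde u + \sqrt{1-r^2}\,\tilde z$ with $\tilde z \perp \tilde u$, or equivalently switch to the polar parametrization $(\tilde u,\tilde v) = R(\cos\alpha,\cos(\alpha-\theta))$ with $R^2$ suitably distributed and $\alpha$ uniform on $[0,2\pi)$; the angular integral $\int_0^{2\pi} \max(\cos\alpha,0)\max(\cos(\alpha-\theta),0)\,d\alpha$ is an elementary calculation over the intersection of the two half-planes where both cosines are non-negative, and evaluates to $\sin\theta + (\pi-\theta)\cos\theta$ up to normalization. Multiplying by the prefactor $2$ from the previous step yields the stated expression $\frac{1}{\pi}(\sin\theta + (\pi-\theta)\cos\theta)$.

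For the variance and concentration claims, each summand $Z_i := \sigma(w_i^\top x)\sigma(w_i^\top y)$ is a product of two (half-)Gaussian random variables of scale $\sqrt{2/D}$, hence $Z_i$ is sub-exponential with Orlicz $\psi_1$-norm of order $1/D$ (since the product of two sub-Gaussians is sub-exponential, and the scaling is quadratic in $\sqrt{2/D}$). Independence of the rows gives $\mathrm{Var}\bigl(\sum_i Z_i\bigr) = D\cdot \mathrm{Var}(Z_1) = O(1/D)$, and Bernstein's inequality for sums of independent sub-exponentials yields the well-concentration claim. The main obstacle is really just the bivariate angular integral; everything else (reduction to one row, homogeneity rescaling, Bernstein concentration) is bookkeeping.
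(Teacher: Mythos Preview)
Your proposal is correct. The reduction to a single row via linearity of expectation, the use of positive homogeneity of $\mathrm{ReLU}$ to extract the $2/D$ scaling, and the evaluation of the bivariate Gaussian integral through the polar parametrization are exactly the standard arc-cosine kernel computation of Cho and Saul; the variance and sub-exponential concentration arguments via Bernstein are also sound.

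The paper itself does not supply a proof of this proposition: it is stated as a cited result from \cite{cho2009kernel} and used only to argue informally that the random feature map does not decrease the NC1 metric. So there is no paper-side proof to compare against beyond the original reference, and your sketch matches that original derivation.
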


This proposition gives a relationship between the dot product $x^\top y$ of two vectors and the dot product of the outputs of their corresponding random feature maps $\sigma(Wx)^\top \sigma(Wy).$ This is relevant because for unit vectors, the dot product is a direct indication of the distance between the two vectors. 

This means that the distances between data points generally increase, but not drastically. They increase irrespective of the angle between data points, however they tend to expand relatively more, if the angle $\theta$ is close to $0$. Therefore, the DNC1 metric should on average marginally increase or stay constant. This is in agreement with our Deep RFM measurements on all datasets we consider (Appendix~\ref{appB:experiments}), for both ReLU and cosine activations. Importantly, this also supports the connection between neural networks and Deep RFM, since in DNNs the DNC1 metric did not significantly change after applying the ReLU in our setting (e.g. Figure~\ref{fig: svd neural collapse, CIFAR}).

\section{Additional results on parametrized kernel ridge regression}
\label{app: parametrized krr results}

\paragraph{Vector-valued Reproducing Kernel Hilbert Spaces}
Since we consider multi-class classification, the function we learn is multi-output and therefore we have to use vector-valued kernel ridge regression in the most general case. Similar to kernel ridge regression with a single output, its multi-dimensional generalization also has interpretation through minimization in an RKHS function space, just with functions of a vector-valued RKHS. The vector-valued RKHS is a Hilbert space $\mathcal{H}$ of vector-valued functions on a domain $\mathcal{X}$ such that there exists a positive-definite matrix-valued function $\Gamma: \mathcal{X}\times \mathcal{X} \xrightarrow[]{} \mathbb{R}^{K\times K}$ such that for any $x\in \mathcal{X}$ and any probe vector $c\in \mathbb{R}^{K \times 1},$ the function $\Gamma(x, \cdot)c$ is in $\mathcal{H}$ and a reproducing property similar to scalar valued RKHS holds: $f(x)\tran c=\left\langle f, \Gamma(x, \cdot)c \right\rangle_{\mathcal{H}}.$ For more on vector-valued RKHS see e.g. \cite{ciliberto2015convex}.

In our setting, we only consider a subclass of vector-valued RKHS in which the matrix valued function $\Gamma$ can be decomposed as $\Gamma(x, z)=k(x, z) \cdot I_K$ for some scalar-valued kernel $k : \mathcal{X} \times \mathcal{X} \rightarrow \Real.$ This particular type of RKHS is referred to as separable and is the parametrization used for Deep RFM and kernel ridge regression in practice.

Now we formally connect the Problems $\ref{eq:finite_dim_pkrr_multiclass}$ and $\ref{eq:finite_dim_pkrr_kernel_multiclass}$ under the assumption that the data $X$ has full column rank. 

\begin{proposition}
\label{prop: relaxation tight, parametrized krr}
When the data Gram matrix $X$ has minimum eigenvalue $\lambda > 0$ (otherwise assuming the same setup as Section~\ref{ssec:nonasymptotics}), the relaxation in Problem~\eqref{eq:finite_dim_pkrr_kernel_multiclass} has the same solution as Problem~\eqref{eq:finite_dim_pkrr_multiclass}.
\end{proposition}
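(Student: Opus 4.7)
The plan is to show the optimal values of Problems~\eqref{eq:finite_dim_pkrr_multiclass} and~\eqref{eq:finite_dim_pkrr_kernel_multiclass} coincide via two inequalities, only one of which is nontrivial. The easy direction, that the optimum of the relaxed Problem~\eqref{eq:finite_dim_pkrr_kernel_multiclass} is at most that of Problem~\eqref{eq:finite_dim_pkrr_multiclass}, is immediate: for any feasible $(A, M)$ of the parametrized problem, the matrix $k := k_M(X, X)$ is PSD, has unit diagonal (since $\phi(0) = 1$), and strictly positive entries (since $\phi > 0$), so $(A, k)$ is feasible for the relaxed problem with the same objective value.

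For the converse, I would invoke Theorem~\ref{thm:parametrized krr} to write the unique optimum of the relaxed problem as $k^* = Y^\top Y$. Since $k^*$ has zero off-diagonal entries for between-class pairs while $\phi > 0$, $k^*$ is not realized as $k_M(X, X)$ by any finite PSD $M$. The approach is instead to construct a sequence of PSD matrices $\{M_t\}_{t > 0}$ such that $k_{M_t}(X, X) \to k^*$ entrywise; combined with continuity of the Problem~\eqref{eq:finite_dim_pkrr_multiclass} objective in the induced kernel matrix, this would yield the matching inequality in the limit.

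The construction uses the full column rank of $X$ crucially. Let $V_W := \mathrm{span}\{x_{ci} - x_{cj} : c \in [K], \, i, j \in [n]\}$ denote the span of within-class differences, and let $P$ be the orthogonal projector onto $V_W^\perp$. Setting $M_t := t P$, within-class pairs contribute $\|x_{ci} - x_{cj}\|_{M_t} = 0$, so $k_{M_t}(x_{ci}, x_{cj}) = 1$ for every $t$. The central technical step is to verify that every between-class difference $x_{ci} - x_{c'j}$ with $c \ne c'$ lies outside $V_W$: using the characterization $V_W = \{\sum_{c, i} \beta_{ci} x_{ci} : \sum_i \beta_{ci} = 0 \text{ for each } c\}$ together with uniqueness of representation in the basis $\{x_{ci}\}$ (afforded by the full column rank assumption), the unique expansion of $x_{ci} - x_{c'j}$ in that basis has coefficient sums equal to $\pm 1$ within classes $c$ and $c'$ and so cannot meet the defining constraint of $V_W$. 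Consequently $\|P(x_{ci} - x_{c'j})\| > 0$, and $\|x_{ci} - x_{c'j}\|_{M_t}^2 = t \|P(x_{ci} - x_{c'j})\|^2 \to \infty$, so $\phi(\|x_{ci} - x_{c'j}\|_{M_t}) \to 0$ under the standard assumption $\phi(r) \to 0$ as $r \to \infty$ (satisfied for Gaussian and Laplace kernels).

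The main obstacle is closing the limiting argument: one must verify that the inner minimum over $A$ in Problem~\eqref{eq:finite_dim_pkrr_multiclass} at $M_t$ converges to the corresponding inner minimum at $k^*$ in the relaxed problem. Since the inner problem is a strictly convex ridge-regularized least-squares program in $A$, its optimal value is a continuous function of the PSD kernel matrix and hence converges along the entrywise limit $k_{M_t} \to k^*$. Combining the two inequalities gives equality of the optimal values, establishing tightness of the relaxation under full column rank of $X$.
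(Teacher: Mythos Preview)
Your proof is correct and reaches the same conclusion as the paper, but by a genuinely different construction. Both arguments show the relaxed optimum $k^*=Y^\top Y$ (from Theorem~\ref{thm:parametrized krr}) is a limit of kernels $k_M(X,X)$ for suitable $M$, then close with continuity. The paper proceeds indirectly: it first realizes an approximation of $k^*$ as $\phi$ applied to Euclidean distances on an auxiliary dataset $R$ (same-class points coincident, different-class points at distance $>\epsilon^{-1}$), and then uses full column rank of $X$ to solve $R=NX$ and set $M=N^\top N$. This detour has the side benefit of showing that \emph{any} realizable kernel matrix can be matched by some $M$ on the given data, not just $k^*$. Your projector construction $M_t=tP$ is more direct and geometric; the use of full column rank to certify that between-class differences lie outside $V_W$ via uniqueness of coefficients is clean and arguably more transparent than the paper's linear-system step.

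One small imprecision: the inner problem in $A$ is not strictly convex when the kernel matrix $k$ is singular (the Hessian is $k(k+\mu I)$), and your $k_{M_t}$ has all within-class entries equal to $1$, so singularity is not ruled out. The continuity claim is nonetheless correct, since for any PSD $k$ the inner optimal value equals $\mu\,\tr\!\bigl(Y(k+\mu I)^{-1}Y^\top\bigr)$, which is manifestly continuous in $k$. You might replace the ``strictly convex'' justification with this explicit formula. Finally, you correctly flag that $\phi(r)\to 0$ is needed; the paper's proof uses the same unstated assumption.
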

\begin{proof}[Proof of Proposition~\ref{prop: relaxation tight, parametrized krr}]
We first show any $k$ realizable (as $\phi$ applied to the Euclidean distance on some dataset $R$) can be constructed by applying $k_M$ to our data 
$X$ under Mahalanobis distance with appropriately chosen $M$ matrix. Let 
$k$ be a realizable matrix - i.e. any desired positive semi-definite kernel matrix for which there exists a dataset $R'$ that satisfies $k = \phi(d(R',R'))$, where $d: \Real^{n \times d} \times \Real^{n \times d} \rightarrow \Real^{n \times n}$ denotes the matrix of Euclidean distances between all pairs of points in the first and second argument. Our construction first takes the entry-wise inverse $r = \phi^{-1}(k)$. Since $k$ is realizable, this $r$ must be a matrix of Euclidean distances between columns of a data matrix $R$ of the same dimensions as $X$. Assuming the gram matrix of $X$ is invertible, we simply solve the system 
$R=NX$ for a matrix $N$ and set $M = N\tran N$ which yields $k = \phi(r) = \phi(d(R,R)) = \phi(d_M(X,X))$ where $d_M(\cdot,\cdot)$ denotes the operation that produces a Mahalanobis distance matrix for the Mahalanobis matrix  $M$.

We now give a construction demonstrating that the solution $k^*$ to Problem~\eqref{eq:finite_dim_pkrr_kernel_multiclass} is realizable up to arbitrarily small error using our parametrized kernel on a dataset $R$ under Euclidean distance. We can realize the ideal kernel matrix that solves \eqref{eq:finite_dim_pkrr_multiclass}, $Y\tran Y$, up to arbitrarily small error by choosing $R$ according to a parameter $\epsilon>0$, such that 
$\phi(d(R,R)) \rightarrow Y\tran Y$ as $\epsilon \rightarrow 0$. In particular, for feature vectors $x_i, x_j$ of the same label in columns $i,j$ of $X$, we set the feature vectors $R_i, R_j$ for columns $i,j$ in $R$ to have $\|R_i - R_j\| = 0$. Then, for $x_i, x_j$ in $X$ of different class, we set $R_i, R_j$ as columns of $R$ such that $\|R_i - R_j\| > \epsilon^{-1}$. Then $k(R_i, R_j)$ is identically 1 for $R_i, R_j$ from the same class and converges to 0 for points from different classes, as $\epsilon \rightarrow 0$, giving that $k = \phi(r) \rightarrow Y\tran Y$.

For any choice of $\epsilon>0$, we can apply the procedure described two paragraphs above to construct $M$ that realizes the same $k$ as applying $k_M$ to our dataset under the Mahalanobis distance. Therefore, the solution to \eqref{eq:finite_dim_pkrr_kernel_multiclass} can be constructed as the infimum over feasible solutions to \eqref{eq:finite_dim_pkrr_multiclass}, completing the proof.
\end{proof}

\section{Additional proofs}\label{appA:proofs}

\subsection{Asymptotic results}
\begin{lemma}[Woodbury Inverse Formula \citep{WoodburyInverseFormula}]
\label{lemma: Woodbury Inverse Formula}
\begin{align*}
    \inv{P + UV\tran} = \Pinv - \Pinv U \inv{I + V\tran \Pinv U} V\tran \Pinv.
\end{align*}
\end{lemma}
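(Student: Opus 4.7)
The Woodbury identity is a classical algebraic fact, and the cleanest plan is to skip any attempt at deriving the inverse from scratch and instead verify the identity directly by multiplication. Concretely, I would check that $(P + UV^\top)$ left-multiplied by the proposed right-hand side equals $I$. Since $P$ is invertible and all matrices here are square of appropriate dimension, a right inverse is automatically a left inverse, so one direction suffices.

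The expansion of $(P + UV^\top)\bigl[P^{-1} - P^{-1} U (I + V^\top P^{-1} U)^{-1} V^\top P^{-1}\bigr]$ produces four terms. The first, $P P^{-1}$, gives $I$. The third term, $UV^\top P^{-1}$, will be the one we eventually need to cancel. The second and fourth terms, namely $-U (I + V^\top P^{-1} U)^{-1} V^\top P^{-1}$ and $-UV^\top P^{-1} U (I + V^\top P^{-1} U)^{-1} V^\top P^{-1}$, share the common left factor $U$ and the common right factor $(I + V^\top P^{-1} U)^{-1} V^\top P^{-1}$.

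The one nontrivial step is to factor those common pieces out of the second and fourth terms. Doing so leaves exactly $-(I + V^\top P^{-1} U)$ sandwiched in the middle, and that collapses against its inverse to give $-UV^\top P^{-1}$. This precisely cancels the third term from the expansion, leaving only the $I$ from the first term.

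There is essentially no obstacle: the cancellation never swaps non-commuting factors, and the only standing requirement, namely that the capacitance matrix $I + V^\top P^{-1} U$ be invertible, is built into the statement through the appearance of its inverse on the right-hand side. Because the identity is used later purely as an algebraic rewriting tool (and is cited to \citet{WoodburyInverseFormula}), this brief verification is all that the proof needs to supply.
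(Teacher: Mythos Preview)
Your verification-by-multiplication argument is correct and is the standard way to check the Woodbury identity. The paper itself does not supply a proof of this lemma at all: it is stated with a citation to \citet{WoodburyInverseFormula} and used as a black-box tool in the proof of Theorem~\ref{prop:DeepRFM_NC} and Lemma~\ref{lemma: A-star inverse}, so your short direct check already goes beyond what the paper provides.
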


\begin{lemma}[Fixed point of collapse]
\label{lemma: A-star inverse}
Let $A^* = Y\tran Y + \lambdamap I$, the collapsed data gram matrix following an application of the non-linear random feature map $\phimap$. Then,
\begin{align*}
    \inv{A^{*}} = \lambdamap^{-1} I - \lambdamap^{-1} \inv{\lambdamap + n} Y\tran Y. 
\end{align*}
\end{lemma}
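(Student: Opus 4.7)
The plan is a direct application of the Woodbury identity (Lemma~\ref{lemma: Woodbury Inverse Formula}) using the block-rank-$K$ structure of $Y^\top Y$. Specifically, I would take $P = \lambdamap I$, $U = Y^\top$, and $V = Y^\top$ (so that $V^\top = Y$ and $UV^\top = Y^\top Y$). The Woodbury formula then yields
\begin{align*}
(\lambdamap I + Y^\top Y)^{-1} = \lambdamap^{-1} I - \lambdamap^{-2}\, Y^\top \bigl(I_K + \lambdamap^{-1} Y Y^\top\bigr)^{-1} Y.
\end{align*}

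Next, I would exploit the class-balanced structure of the labels. Since $Y = I_K \otimes \mathbf{1}_n^\top$, the mixed product property of the Kronecker product gives $Y Y^\top = (I_K \otimes \mathbf{1}_n^\top)(I_K \otimes \mathbf{1}_n) = I_K \otimes (\mathbf{1}_n^\top \mathbf{1}_n) = n\, I_K$. Consequently, the inner $K \times K$ matrix in the Woodbury expression is a scalar multiple of the identity:
\begin{align*}
\bigl(I_K + \lambdamap^{-1} n I_K\bigr)^{-1} = \frac{\lambdamap}{\lambdamap + n}\, I_K.
\end{align*}

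Substituting this back, the $I_K$ factor passes through the $Y^\top(\cdot) Y$ sandwich to recover $Y^\top Y$, and the scalars combine cleanly:
\begin{align*}
(\lambdamap I + Y^\top Y)^{-1} = \lambdamap^{-1} I - \lambdamap^{-2} \cdot \frac{\lambdamap}{\lambdamap + n}\, Y^\top Y = \lambdamap^{-1} I - \lambdamap^{-1}(\lambdamap + n)^{-1}\, Y^\top Y,
\end{align*}
which is the claimed expression for $(A^*)^{-1}$.

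There is no real obstacle here, since the lemma is essentially a structured linear algebra identity; the only things to be careful about are the bookkeeping for the Kronecker identity $YY^\top = n I_K$ and matching the $\lambdamap$ powers after Woodbury. One could alternatively prove it by direct verification (multiplying the claimed inverse by $A^*$ and using $Y^\top Y \cdot Y^\top Y = n\, Y^\top Y$, which follows from the same Kronecker computation), but the Woodbury route is cleaner and also explains the form $\lambdamap^{-1}(\lambdamap+n)^{-1}$ transparently.
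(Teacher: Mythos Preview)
Your proposal is correct and follows essentially the same route as the paper: both apply the Woodbury identity to the rank-$K$ perturbation $\lambdamap I + Y^\top Y$ and then use that the inner $K\times K$ block is a scalar multiple of the identity. The only cosmetic difference is that the paper first rewrites $Y^\top Y = n\,UU^\top$ with orthonormal $U$ (so that $U^\top U = I_K$) before invoking Woodbury, whereas you work directly with $Y^\top$ and use $YY^\top = nI_K$; these are the same computation up to the rescaling $U = n^{-1/2}Y^\top$.
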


\begin{proof}[Proof of Lemma~\ref{lemma: A-star inverse}]
We write $Y\tran Y = n UU\tran$, where $U \in \Real^{n \times K}$ is the matrix of normalized eigenvectors of $Y\tran Y$. By Lemma~\ref{lemma: Woodbury Inverse Formula} and that $U\tran U = I$,
\begin{align*}
    \inv{A^*} &= \inv{\lambdamap I + n UU\tran} \\
    &= \lambdamap^{-1} I - \frac{n}{\lambdamap^2} U\inv{I + n\lambdamap^{-1} U\tran U}U\tran \\
    &= \lambdamap^{-1} I - \frac{n}{\lambdamap^2} \cdot \frac{1}{1 + n\lambdamap^{-1}} UU\tran\\
    &= \lambdamap^{-1} I - \frac{n}{\lambdamap (\lambdamap + n)} UU\tran \\
    &= \lambdamap^{-1} I - \frac{\lambdamap^{-1}}{\lambdamap + n} Y\tran Y.
\end{align*}
\end{proof}

\begin{proof}[Proof of Proposition~\ref{prop:DeepRFM_NC}]
We have solved the kernel ridgeless regression problem to get coefficients $\alpha = \round{ X_l X_l\tran + \lambdalin I}^{-1} y$. Then, the predictor at layer $l$ evaluated on the training data is,
\begin{align*}
    f_l(X_l) = (X_l\tran X_l + \lambdalin I) \alpha~.
\end{align*}
As in Deep RFM, let $M$ be the $\AGOP$. Then,
\begin{align*}
    M = \sum_{i=1}^n \nabla f(x^{(i)}) (\nabla f(x^{(i)})) \tran = X_l \alpha \alpha\tran X_l\tran~.  
\end{align*}
We drop the subscript $l$ for simplicity. Therefore,
\begin{align*}
    X\tran M X = X\tran X \inv{X\tran X + \lambdalin I} Y\tran Y \inv{X\tran X + \lambdalin I} X\tran X~.
\end{align*}
Let $A = X\tran X$. Applying Lemma~\ref{lemma: Woodbury Inverse Formula}, 
\begin{align*}
    \inv{A + \lambdalin I} = \Ainv - \lambdalin \Ainv \inv{I + \lambdalin \Ainv} \Ainv~.
\end{align*}
Therefore,
\begin{align*}
    X\tran MX = \round{I - \lambdalin \inv{I + \lambdalin \Ainv} \Ainv} Y\tran Y \round{I - \lambdalin \inv{I + \lambdalin \Ainv} \Ainv}~.
\end{align*}
Applying that, by assumption, $\lambdalin \lambdamap^{-1}$ is a small value $\epsRatio$, and as $\lambdamap$ is the minimum eigenvalue of $A$, we have that $\lambdalin \|\Ainv\| \leq \lambdalin \lambdamap^{-1} \triangleq \epsRatio$. Therefore, $\inv{I + \lambdalin \Ainv} \lambdalin \Ainv = \lambdalin \Ainv - \round{\lambdalin \Ainv}^2 + \cdots = \lambdalin \Ainv + O(\epsRatio^2)$, where $O(\epsRatio^2)$ refers to some matrix of spectral norm at most $C \epsRatio^2$ for some constant $C>0$. Then,
\begin{align*}
    X\tran MX = \round{I - \lambdalin \Ainv + O(\epsRatio^2)} Y\tran Y \round{I - \lambdalin \Ainv + O(\epsRatio^2)} =  Y\tran Y - \lambdalin \Ainv Y\tran Y - \lambdalin Y\tran Y \Ainv + O(\epsRatio^2)~.
\end{align*}
Let $\At = \frac{A - A^*}{\|A - A^*\|}$, and $\epsA = \|A - A^*\|$. Then, applying Lemma~\ref{lemma: Woodbury Inverse Formula},
\begin{align*}
    \Ainv = \inv{\Astar + \epsA \At} = \inv{I + \epsA \inv{\Astar} \At} \inv{\Astar} = \inv{\Astar} - \epsA \inv{\Astar} \inv{I + \epsA \At \inv{\Astar}} \At \inv{\Astar}~.
\end{align*}
Let $\tilde{\Psi} = \inv{I + \epsA \At \inv{\Astar}}$. Assuming partial collapse has already occurred, i.e., $\epsA \lambdamap^{-1} < 1/2$,
\begin{align*}
     \|\tilde{\Psi}\| \leq \inv{1 - \epsA \lambdamap^{-1}} \leq 2~.
\end{align*}
In this case, using that $Y\tran Y \inv{\Astar} = \inv{\Astar} Y\tran Y = (1 + \lambdamap^{-1}) Y\tran Y$ (Lemma~\ref{lemma: A-star inverse}),
\begin{align*}
    X\tran MX &= Y\tran Y - \lambdalin \Ainv Y\tran Y - \lambdalin Y\tran Y \Ainv + O(\epsRatio^2)\\
    &= Y\tran Y - \lambdalin \inv{\Astar} Y\tran Y - \lambdalin Y\tran Y \inv{\Astar} + O(\epsRatio^2)\\
    & - \lambdalin \epsA \inv{\Astar} \tilde{\Psi} \At \inv{\Astar} Y\tran Y - \lambdalin \epsA Y\tran Y \inv{\Astar} \At \tilde{\Psi} \inv{\Astar} \\
    &= Y\tran Y - 2 \lambdalin (1 + \lambdamap^{-1}) Y\tran Y + O(\epsRatio^2)\\ 
    &- \lambdalin \epsA (1 + \lambdamap^{-1}) \inv{\Astar} \tilde{\Psi} \At Y\tran Y - \lambdalin \epsA (1 + \lambdamap^{-1}) Y\tran Y \At \tilde{\Psi} \inv{\Astar}~.
\end{align*}
Therefore, where $\kappa = 1 - 2 \lambdalin (1 + \lambdamap^{-1}) > 1/2$ by choice of $\lambdalin \cdot 2 \lambdamap^{-1} (1 + \lambdamap^{-1}) n < 1 - \epsilon$,
\begin{align*}
    \|X_{l+1}\tran X_{l+1} - \Astar\| &= \|\kappa^{-1} X\tran MX - Y\tran Y\| \\
    &= \norm{ O(\epsRatio^{2}) + \epsA  \cdot \lambdalin \cdot 2 \kappa^{-1} (1 + \lambdamap^{-1}) \inv{\Astar} \tilde{\Psi} \At Y\tran Y} \\
    &\leq O(\epsRatio^{2}) + \epsA \cdot \lambdalin \cdot 2 \lambdamap^{-1} (1 + \lambdamap^{-1}) n \\
    &< O(\epsRatio^{2}) + \epsA (1 - \epsilon)\\ 
    &= O(\epsRatio^2) + (1 - \epsilon) \|X_l X_l\tran - \Astar\|~.
\end{align*}

It remains to show that partial collapse, $\epsA \lambdamap^{-1} < 1/2$, happens in the first iteration. To ensure this condition, recall,
\begin{align*}
    X\tran MX = Y\tran Y - \lambdalin \Ainv Y\tran Y - \lambdalin Y\tran Y \Ainv + O(\epsRatio^2)~.
\end{align*}
Therefore, because $\|\Ainv\| \leq \lambdamap^{-1}$, where $\Psi$ is some error matrix with norm $1$,
\begin{align*}
    X\tran MX = Y\tran Y + \lambdalin \lambdamap^{-1} n \Psi + O(\lambdalin^2 \lambdamap^{-2})~.
\end{align*}
Therefore,
\begin{align*}
    \|X\tran MX - Y\tran Y\| < 1/2~,
\end{align*}
provided $\lambdalin \lambdamap^{-1} n < C_1/2$, for some universal constant $C_1$.
\end{proof}

\subsection{Non-asymptotic results}
\begin{theorem*}
The unique optimal solution to the (relaxed) parametrized kernel ridge regression objective (Problem~\ref{eq:finite_dim_pkrr_kernel_multiclass}) is the kernel matrix $k^*$ exhibiting neural collapse, $k^*=I_K\otimes(\mathbf{1}_n\mathbf{1}_n^\top)=Y\tran Y.$\end{theorem*}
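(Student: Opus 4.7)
The plan is to profile out the coefficient matrix $A$ and reduce to a scalar minimization in $k$. For fixed $k$, Problem~\eqref{eq:finite_dim_pkrr_kernel_multiclass} is a standard vector-valued kernel ridge regression in $A$, and its first-order condition yields $A^\star = Y(k+\mu I)^{-1}$. Substituting back and simplifying via $(\mu I + k)(k+\mu I)^{-2} = (k+\mu I)^{-1}$ collapses the objective to
\begin{align*}
    g(k) \;=\; \mu \tr\!\bigl((k+\mu I)^{-1} Y\tran Y\bigr) \;=\; \mu\sum_{c=1}^K y_c\tran (k+\mu I)^{-1} y_c,
\end{align*}
where $y_c \in \Real^{Kn}$ is the indicator of class $c$ (the $c$-th row of $Y$, with $y_c\tran y_c = n$). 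Since $Y\tran Y = k^\star$, the problem becomes $\min_k g(k)$ over the feasible set, and a direct computation using $(k^\star + \mu I) y_c = (n+\mu) y_c$ gives the target value $g(k^\star) = \mu K n/(n+\mu)$.

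The next step is a chain of two Cauchy--Schwarz inequalities closed off by a combinatorial bound on $\tr(k\, k^\star)$. Applying Cauchy--Schwarz to the splitting $y_c = (k+\mu I)^{1/2}(k+\mu I)^{-1/2} y_c$ gives $y_c\tran (k+\mu I)^{-1} y_c \geq n^2 / \bigl(y_c\tran (k+\mu I) y_c\bigr)$. A second Cauchy--Schwarz (equivalently, convexity of $x \mapsto 1/x$) then yields
\begin{align*}
    \sum_{c=1}^K y_c\tran (k+\mu I)^{-1} y_c \;\geq\; \frac{n^2 K^2}{\sum_c y_c\tran (k+\mu I) y_c} \;=\; \frac{n^2 K^2}{\tr(k\,k^\star) + \mu n K}.
\end{align*}
The feasibility constraints (PSD, unit diagonal, entry-wise non-negative) force $0 \leq k_{ij} \leq 1$; and because $k^\star$ is a $0/1$ matrix with exactly $Kn^2$ ones, one has $\tr(k\, k^\star) \leq Kn^2$. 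Chaining the three bounds gives $g(k) \geq \mu n K/(n+\mu) = g(k^\star)$, establishing optimality of $k^\star$.

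Uniqueness is extracted by reading off the equality conditions along the chain. Saturating the combinatorial bound forces every within-class entry of $k$ to equal $1$. Equality in the first Cauchy--Schwarz forces each $y_c$ to be an eigenvector of $k$, and equality in the second forces these eigenvalues to coincide, which the within-class ones already pin to $n$. Writing the eigenvalue equation $k y_c = n y_c$ at a cross-class coordinate $(c', i')$ gives $\sum_{j \in \text{class } c} k_{(c',i'),(c,j)} = 0$; by entry-wise non-negativity, every cross-class entry must vanish, leaving $k = k^\star$ as the unique minimizer.

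The main obstacle I anticipate is ensuring the three inequalities are \emph{simultaneously} tight at $k^\star$, so that the lower bound is actually attained rather than merely approached; matching $g(k^\star)$ by direct computation is what closes this gap. Relatedly, the uniqueness step crucially exploits the entry-wise non-negativity constraint, not merely positive semi-definiteness: without it, cross-class entries could carry cancelling signs and still satisfy the eigenvalue equation, leaving a flat direction in the optimization landscape and breaking uniqueness.
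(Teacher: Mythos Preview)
Your proof is correct, and it takes a genuinely different route from the paper's argument. Both proofs begin by profiling out $A$ to reduce to an optimization over $k$ alone; the paper's reduced objective $\sup_k \sum_c Y_c^\top (k+\mu I)^{-1} k\, Y_c$ and your $\min_k \mu\,\tr\bigl((k+\mu I)^{-1} Y^\top Y\bigr)$ are equivalent via the identity $(k+\mu I)^{-1}k = I - \mu(k+\mu I)^{-1}$. From there the two arguments diverge. The paper passes to the eigendecomposition $k=VSV^\top$, writes the objective as a weighted sum $\sum_i \omega_i\,\lambda_i/(\lambda_i+\mu)$ with $\omega_i=\tfrac{1}{K}\sum_c(Y_c^\top v_i)^2$, bounds $\omega_i\le 1/K$ by a coordinate Cauchy--Schwarz, majorizes to the top-$K$ eigenvalues using monotonicity, applies Jensen via concavity of $x\mapsto x/(x+\mu)$, and closes with the trace constraint $\sum_i\lambda_i=Kn$; uniqueness is obtained by invoking Perron--Frobenius to force Jensen to be strict. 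Your argument instead stays entirely in coordinates: a Kantorovich-type Cauchy--Schwarz $y_c^\top(k+\mu I)^{-1}y_c \cdot y_c^\top(k+\mu I)y_c \ge n^2$, the AM--HM inequality across classes, and the entry-wise bound $k_{ij}\le 1$ (from PSD with unit diagonal) to control $\tr(k\,k^\star)$. The payoff of your route is that it is more elementary---no spectral decomposition, no Perron--Frobenius---and the uniqueness analysis is especially clean: the equality cases of the two Cauchy--Schwarz steps and the entry-wise bound directly force the within-class ones and, through non-negativity, the cross-class zeros. The paper's spectral viewpoint, on the other hand, makes visible how the optimum concentrates the ``mass'' $\omega_i$ on the top eigenspace, which is structurally informative even if not needed for the bare statement.
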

\begin{proof}[Proof of Theorem~\ref{thm:parametrized krr}]
Denote $\mathcal{L}(A, k) = \tr\left(A(k^2+\mu k)A^T\right)-2\tr(AkY^T).$ Note that this problem is separable in the rows of $A.$ If we only look at one row of $A$ at a time, we can solve this problem explicitly for that row if we assume that $k$ is fixed and positive definite, simply by solving solving for the zero gradient. Doing this for every row of $A$ and summing up, we get that Problem~\ref{eq:finite_dim_pkrr_multiclass} can be reduced to the following problem:
\begin{align*}
\underset{k}{\sup}\, \sum_{c=1}^K Y_c^T(k+\mu I)^{-1}kY_c,
\end{align*}
where $Y_c \in \Real^{N \times 1}$ is the vector of labels for class $c$. After writing the eigenvalue decomposition of $k$ as $VSV^T$ and using $(k+\mu I)^{-1}k=V(S+\mu I)^{-1}V^TVSV^T,$ this supremum is equivalent to:
\begin{align*}
\underset{k}{\sup}\, \sum_{c=1}^K \sum_{i=1}^{Kn} (Y_c^Tv_i)^2\frac{\lambda_i}{\lambda_i+\mu},
\end{align*}
where $\lambda_i, v_i$ are the $i$-th eigenvalue and eigenvector, respectively. By continuity of the Problem~\ref{eq:finite_dim_pkrr_multiclass} in both variables, we can without loss of generality plug $k^*$ into this despite being low-rank. This can be seen by contradiction -- if the $\mathcal{L}(A^*, k^*)$ would not be equal to $\sum_{c=1}^K Y_c^T(k^*+\mu I)^{-1}k^*Y_c$, we can find a converging sequence $k_i$ of symmetric PD kernel matrices that converges to $k^*$ for which the two objectives are equal and by continuity they converge to the objectives evaluated at $A^*, k^*,$ thus they must be equal as well. After a simple computation we get:
\begin{align*}
\mathcal{L}(A^*, k^*)=Kn\frac{n}{n+\mu}.
\end{align*}
With slight abuse of notation let us now re-scale $y$, so that each row is unit norm and divide the loss $\mathcal{L}$ by $K.$ Then we get (abusing the notation):
\begin{align*}
\mathcal{L}(A^*, k^*)=\frac{n}{n+\mu}.
\end{align*}
Now, $\sum_{i=1}^{Kn}(y^Tv_i)^2=1$ since $(v_i)_{i=1}^{Kn}$ forms an orthogonal basis and therefore a tight frame with constant $1.$ Therefore the expression 
\begin{align*}
    \underset{k}{\sup}\, \frac{1}{K}\sum_{c=1}^K \sum_{i=1}^{Kn} (Y_c^Tv_i)^2\frac{\lambda_i}{\lambda_i+\mu}
\end{align*}
can be viewed as a weighted average of the $\frac{\lambda_i}{\lambda_i+\mu}$ terms. To simplify the expression, denote $\omega_i:=\frac{1}{K}\sum_{c=1}^K (Y_c^Tv_i)^2.$ Using Cauchy-Schwarz inequality on each individual summand, we see that $(Y_c^Tv_i)^2\le \sum_{j; y_{cj}=1}v_{ij}^2$ and the equality is only achieved if all the entries of $v_i$ corresponding to entries of $Y_c$ equal to $1$ are the same. Thus, we get $\sum_{c=1}^K (Y_c^Tv_i)^2\le \norm{v_i}^2=1$ with the equality if and only if, for each $c,$ all the entries of $q_i$ corresponding to the entries of $Y_c$ equal to $1$ are the same. This gives that $\omega_i\le \frac{1}{K}.$ 

Now, take any feasible $k.$ Then, 
\begin{align*}
    \mathcal{L}(k, \alpha^*) = \sum_{i=1}^{Kn} \omega_i \frac{\lambda_i}{\lambda_i+\mu} \le \frac{1}{K}\sum_{i=1}^K \frac{\lambda_i}{\lambda_i+\mu} < \frac{\frac{1}{K}\sum_{i=1}^K \lambda_i}{\frac{1}{K}\sum_{i=1}^K \lambda_i+\mu} \le \frac{n}{n+\mu}. 
\end{align*}
The first inequality is due to the monotonicity of the function $g(x)=\frac{x}{x+\mu}$ and $\omega_i\le \frac{1}{K}$; the second strict inequality is due to Jensen after noting that $g$ is strictly convex and using the Perron-Frobenius theorem which says that $\lambda_1$ has multiplicity $1$; and the last inequality is due to $\sum_{i=1}^{Kn}\lambda_i=Kn$, which follows from the well-known trace inequality and the fact that $k$ must have all the diagonal elements equal $1.$ This concludes the proof. 
\end{proof} 

\section{Experimental details}

In both Deep RFM and neural networks, we one hot encode labels and use $\pm 1$ for within/outside of each class. 

For the neural network experiments, we use 5 hidden layer MLP networks with no biases and ReLU activation function. All experiments use SGD with batch size 128. We use default initialization for the linear readout layer. The layers all use width $512$ in all experiments. All models are trained with MSE loss. We measure the AGOP and NC metrics every 10 epochs. VGG was trained for 600 epochs with 0.9 momentum, learning rate 0.01, and initialization 0.3 in the intermediate layers (0.2 for MNIST). ResNet was trained for 300 epochs, no momentum, 0.2 init. scale in the intermediate layers, and learning rate 0.05. MLPs were trained for 500 epochs, no momentum, 0.3 init. scale (0.2 for MNIST), and learning rate 0.05.

We use the standard ResNet18 architecture (from the Pytorch \texttt{torchvision} library), where we replace the classifier (a fully-connected network) at the end with 5 MLP layers (each layer containing a linear layer followed by a ReLU, without any biases). We truncate the VGG-11, as defined in \texttt{torchvision.models}, to the ReLU just before the final two pooling layers, so that the pooling sizes matches MNIST, which contains 28x28 images, then attach at the end a 5-layer MLP as the classifier.

For the Deep RFM experiments, we generally use the Laplace kernel $k$, which evaluates two datapoints $x, z \in \Real^d$ as $k(x,z) = \exp \round{- \| x - z \|_2 / L }$, where $L$ is a bandwidth parameter. For the experiments with ReLU on MNIST, we use the Gaussian kernel with bandwidth $L$ instead. In these experiments, we set $L = 2.0$. We perform ridgeless regression to interpolate the data, i.e. $\mu = 0.$ We use width $1024$ for the random feature map in Deep RFM with ReLU activation function. For experiments with the cosine activation, we used Random Fourier Features corresponding to the $\ell_1$-Laplacian kernel with bandwidth $\sigma=0.05$ and 4096 total features. 

Unless otherwise specified, we perform experiments using the first 50,000 points from each of MNIST, SVHN, and CIFAR-10 loaded with \texttt{torchvision} datasets.

For neural network experiments, we averaged over 3 seeds and report 1 standard deviation error bars, truncating the lower interval to $0.01\times$ the mean value. We report Pearson correlation for the NFA values (where each matrix is subtracted by its mean value). As noted in the main text, we compute correlation of the NFM with the square root of the AGOP as in \citet{AGOPScience}.

Each experiment was performed on a single NVIDIA A100 GPU. Each experiment was completed in under 1.5 GPU hours.

All code is available at this link: \url{https://github.com/dmbeaglehole/neural_collapse_rfm}. See in particular \texttt{nc\_nn.py} and \texttt{deep\_rfm.py} for NN and Deep RFM code. 

The code for Random Fourier Features (used in Deep RFM with cosine activation function) was adapted from \url{https://github.com/hichamjanati/srf.git}.

\section{Additional plots}\label{appB:experiments}

In this section, we give the full set of results for all combinations of datasets and neural network architectures for Deep RFM and standard DNNs (VGG, ResNet, and MLPs). In Figures~\ref{fig: deep rfm neural collapse, ReLU} and \ref{fig: deep rfm neural collapse vis, RFF}, we plot the NC1 and NC2 metrics for Deep RFM as a function of the layer for $\ReLU$ and $\cos$ activation functions. In Figures~\ref{fig: deep rfm neural collapse, RFF} and \ref{fig: deep rfm neural collapse vis, Relu}, we visualize the formation of DNC in Deep RFM as in the main text. In Figures~\ref{fig: svd neural collapse, MLP}, \ref{fig: svd neural collapse, ResNet}, and \ref{fig: svd neural collapse, VGG}, we demonstrate that the right singular structure can reduce the majority of NC1 reduction in MLPs, ResNet, and VGG, respectively. In Figures~\ref{fig: nfa metrics and losses, MLP}, \ref{fig: nfa metrics and losses, ResNet}, and \ref{fig: nfa metrics and losses, VGG}, we verify the NFA, DNC, and plot the training loss.

\begin{figure*}[h]
    \centering
    \includegraphics[scale=0.5]{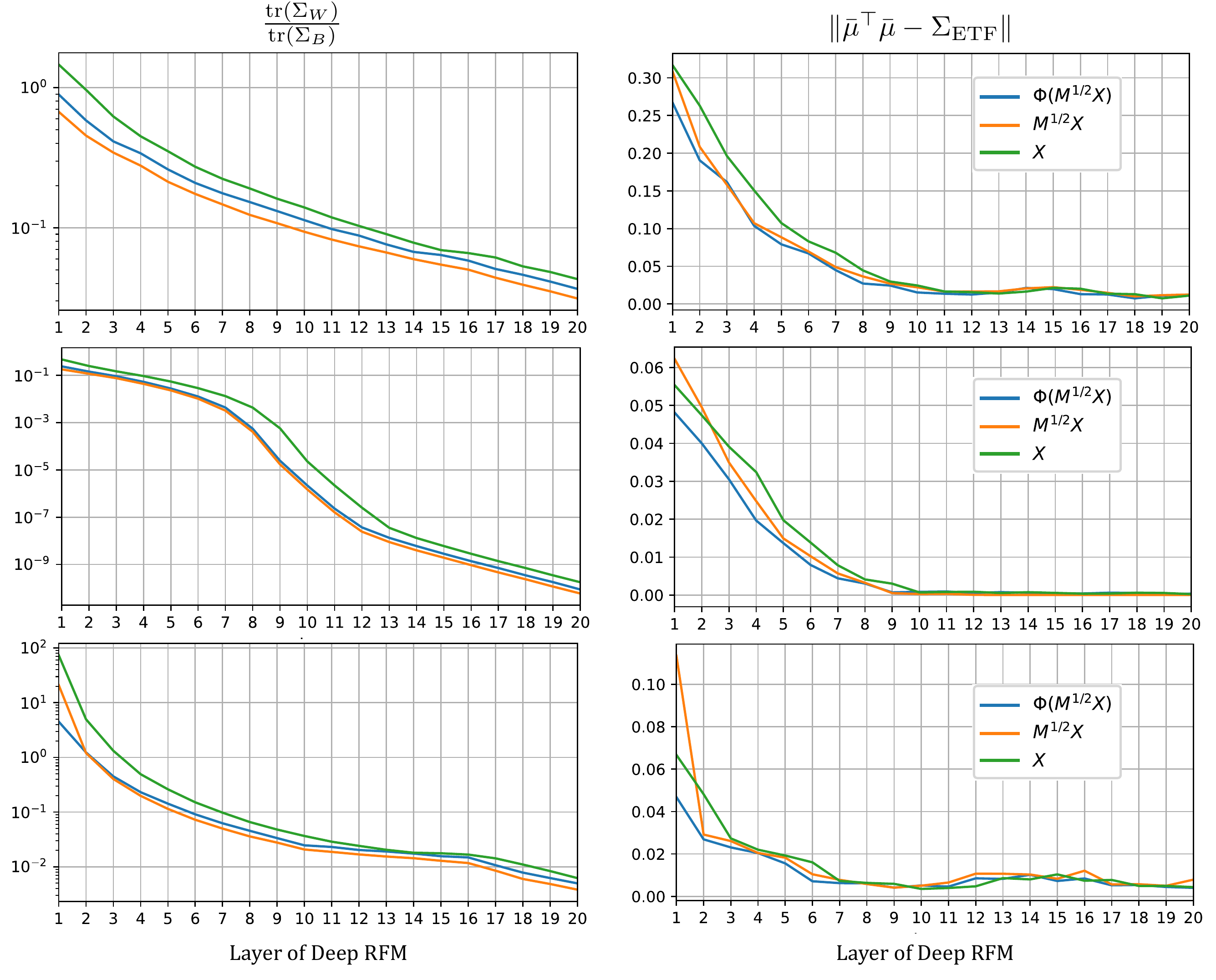}
    \caption{Neural collapse with Deep RFM on additional datasets with $\sigma(\cdot) = \ReLU(\cdot)$. We show $\tr{\Sigma_W}/\tr{\Sigma_B}$, our NC1 metric on the left, and $\left\|\tilde{\mu}\tilde{\mu}^\top - \ETFmat \right\|$, our NC2 metric, on the right. The first row is CIFAR-10, second is MNIST, third is SVHN. We plot these metrics as a function of depth of Deep RFM for the original data $X$ (green), the data after applying the square root of the AGOP $M_l^{1/2} x$ (orange), and the data after the AGOP and non-linearity (blue).}
    \label{fig: deep rfm neural collapse, ReLU}
\end{figure*}

\begin{figure*}[h]
    \centering
    \includegraphics[scale=0.5]{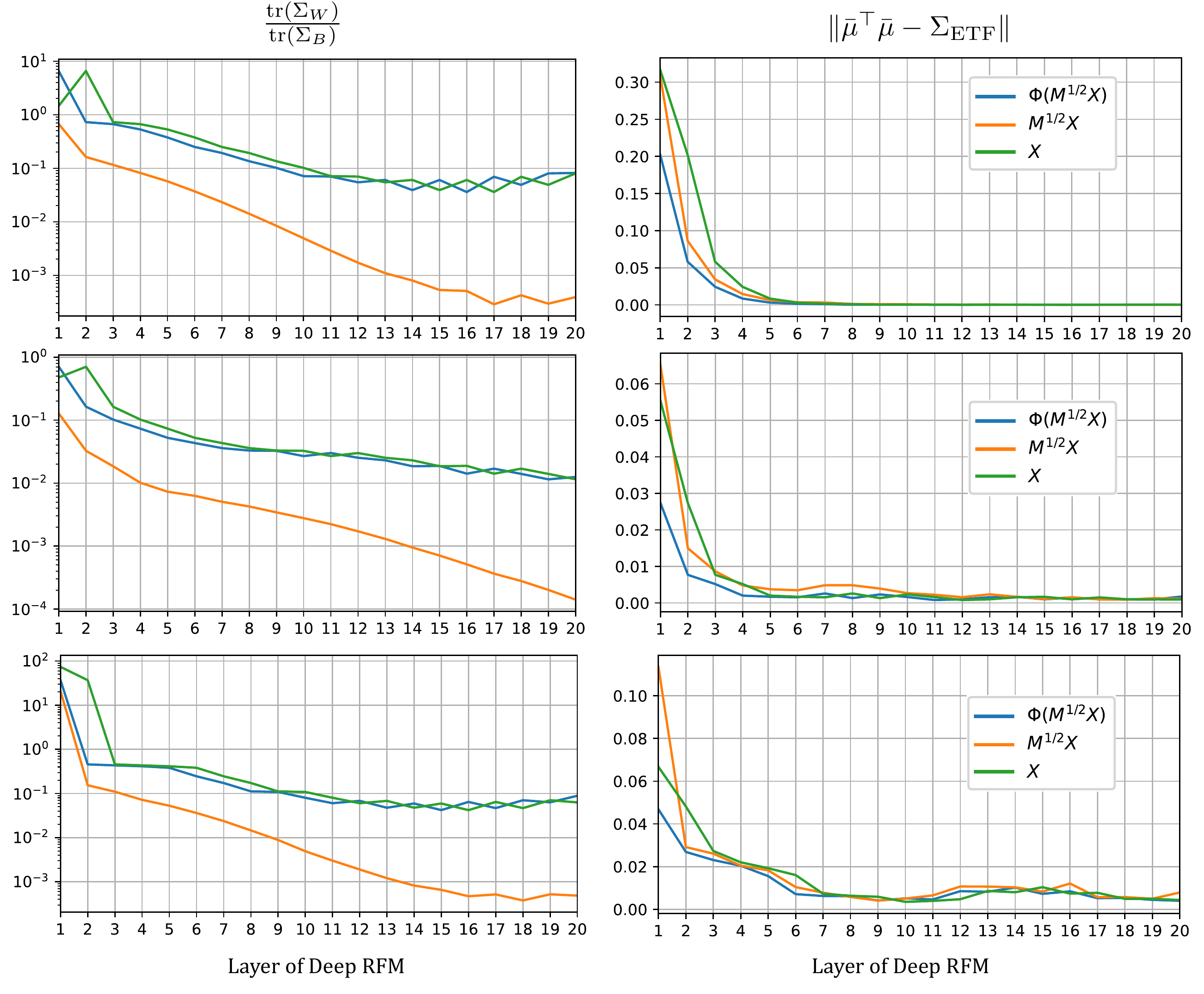}
    \caption{Neural collapse with Deep RFM on additional datasets with $\sigma(\cdot) = \cos(\cdot)$. We show $\tr{\Sigma_W}/\tr{\Sigma_B}$, our NC1 metric on the left, and $\left\|\tilde{\mu}\tilde{\mu}^\top - \ETFmat \right\|$, our NC2 metric, on the right. he first row is CIFAR-10, second is MNIST, third is SVHN. We plot these metrics as a function of depth of Deep RFM for the original data $X$ (green), the data after applying the square root of the AGOP $M_l^{1/2} x$ (orange), and the data after the AGOP and non-linearity (blue).}
    \label{fig: deep rfm neural collapse, RFF}
\end{figure*}

\begin{figure*}[h]
    \centering
    \includegraphics[scale=0.475]{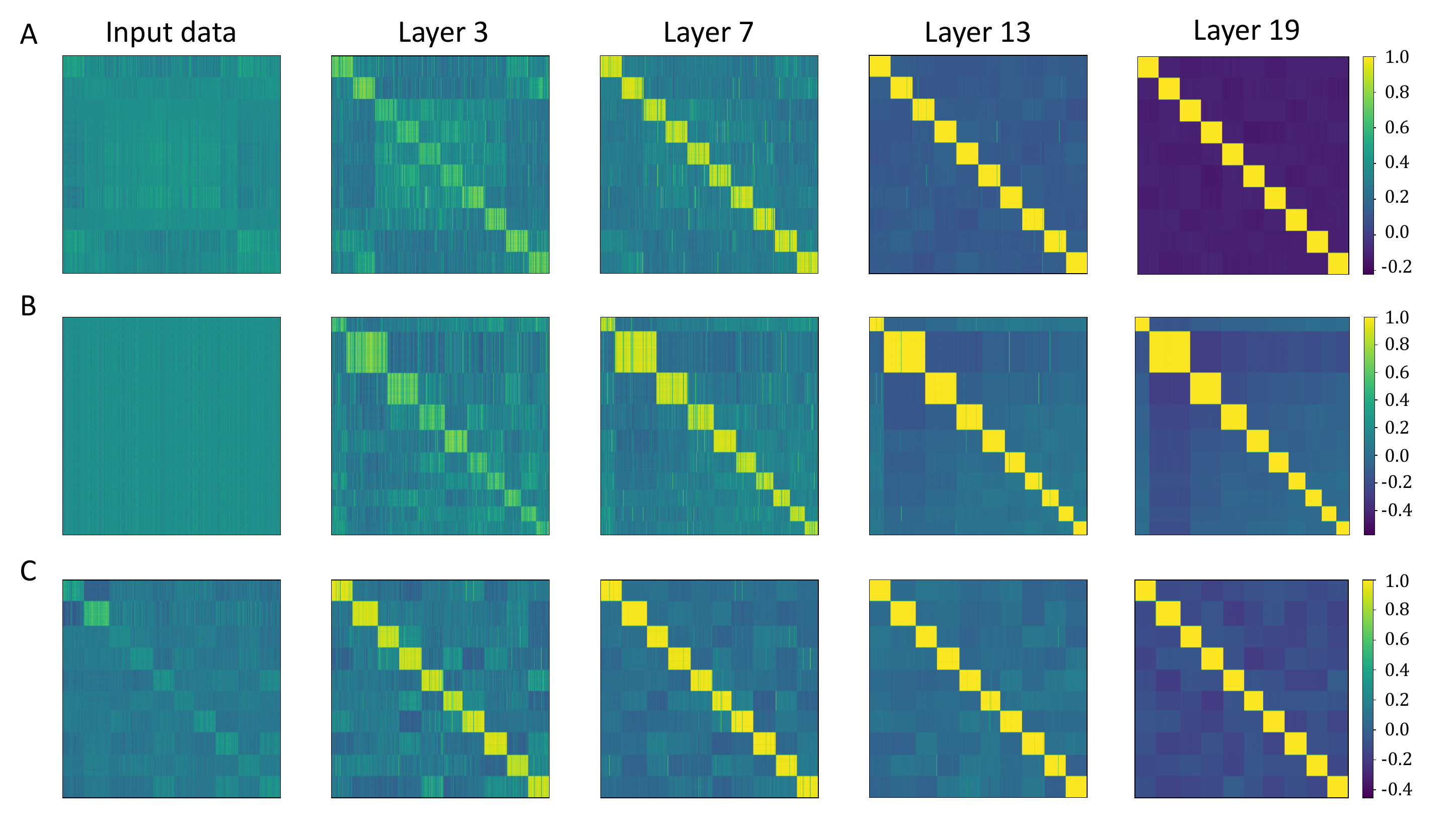}
    \caption{Visualization of neural collapse for Deep RFM on additional datasets with $\sigma(\cdot) = \cos(\cdot)$. As in the main text, we plot the Gram matrix of the centered and normalized feature vectors $\wt{X}_l$. We see the data form the ETF in the final column. (A) corresponds to CIFAR-10, (B) SVHN, and (C) MNIST.}
    \label{fig: deep rfm neural collapse vis, RFF}
\end{figure*}

\begin{figure*}[h]
    \centering
    \includegraphics[scale=0.475]{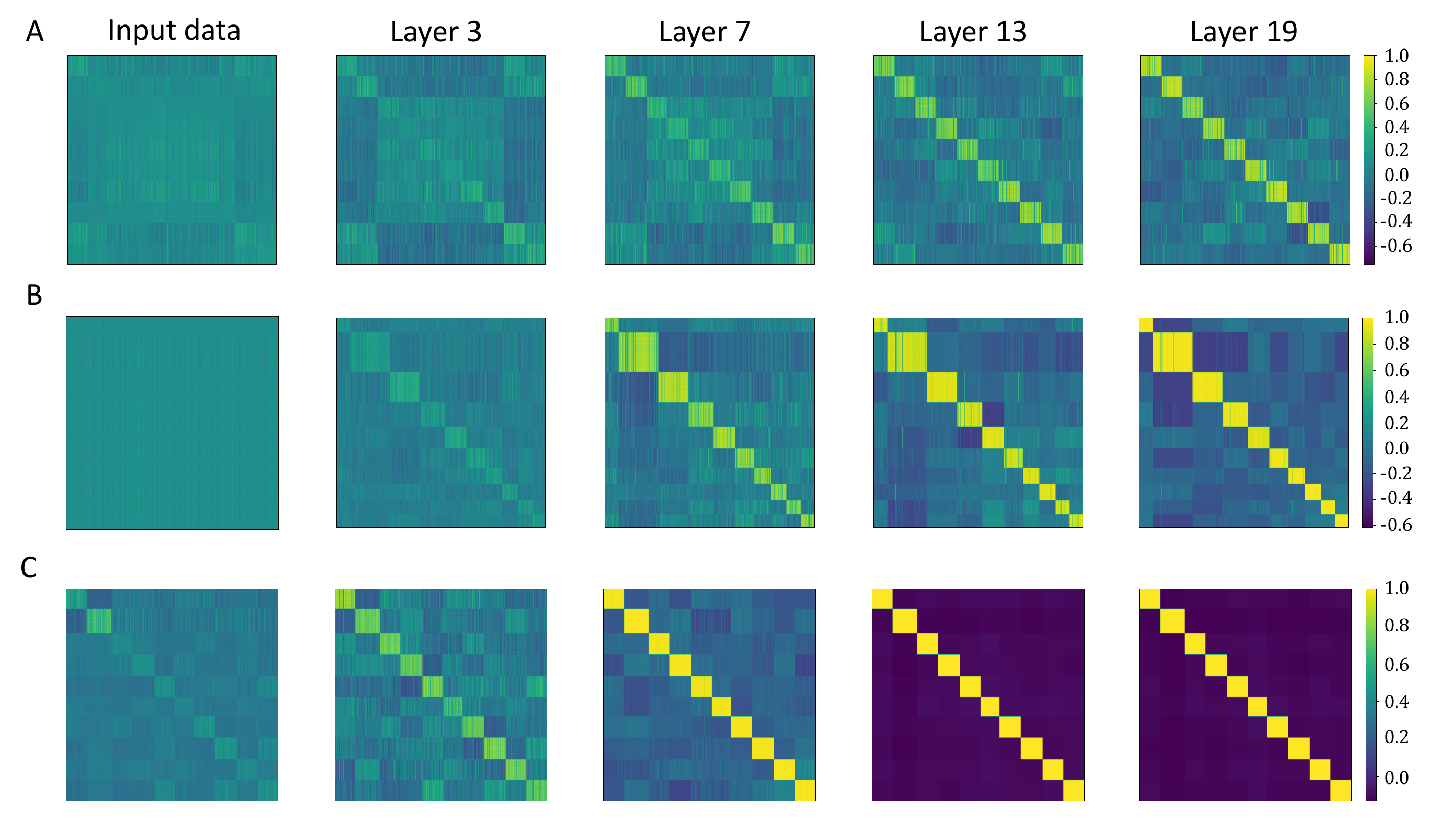}
    \caption{Visualization of neural collapse for Deep RFM on additional datasets with $\sigma(\cdot) = \ReLU(\cdot)$. As in the main text, we plot the Gram matrix of the centered and normalized feature vectors $\wt{X}_l$. The first column displays the Gram matrix of the untransformed (but normalized and centered) data. We see the data form the ETF in the final column. (A) corresponds to CIFAR-10, (B) SVHN, and (C) MNIST.}
    \label{fig: deep rfm neural collapse vis, Relu}
\end{figure*}

\begin{figure*}[h]
    \centering
    \includegraphics[scale=0.65]{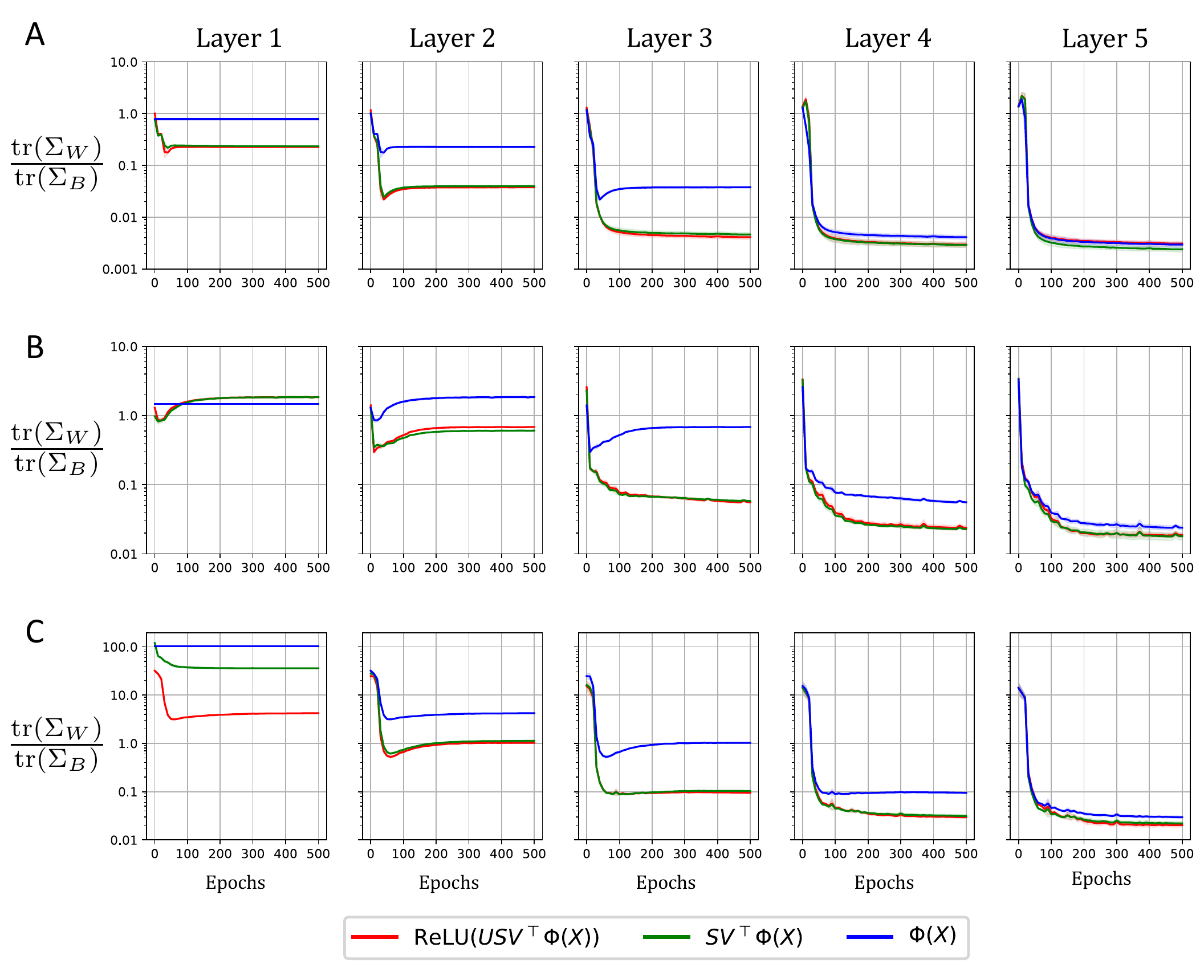}
    \caption{Feature variability collapse (NC1) from different singular value decomposition components on an MLP. The first row (A) is MNIST, second (B) is CIFAR-10, third (C) is SVHN. As in the main text, we plot the NC1 metrics, for the original feature vectors $\Phi(X)$ (blue), the data after applying the right singular structure (green), and the data after the full layer application (red).}
    \label{fig: svd neural collapse, MLP}
\end{figure*}

\begin{figure*}[h]
    \centering
    \includegraphics[scale=0.65]{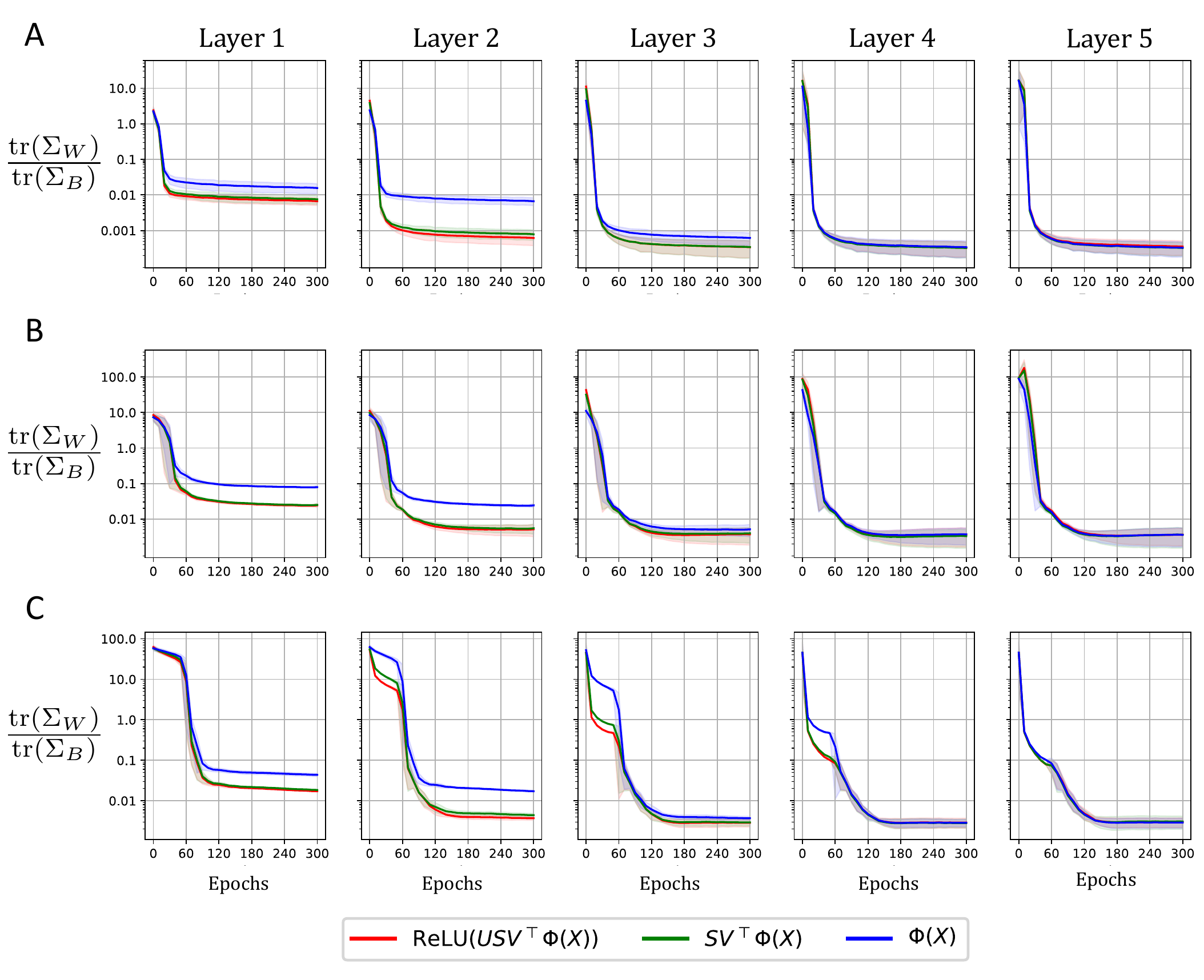}
    \caption{Feature variability collapse (NC1) from different singular value decomposition components on ResNet. The first row (A) is MNIST, second (B) is CIFAR-10, third (C) is SVHN. As in the main text, we plot the NC1 metrics, for the original feature vectors $\Phi(X)$ (blue), the data after applying the right singular structure (green), and the data after the full layer application (red).}
    \label{fig: svd neural collapse, ResNet}
\end{figure*}

\begin{figure*}[h]
    \centering
    \includegraphics[scale=0.645]{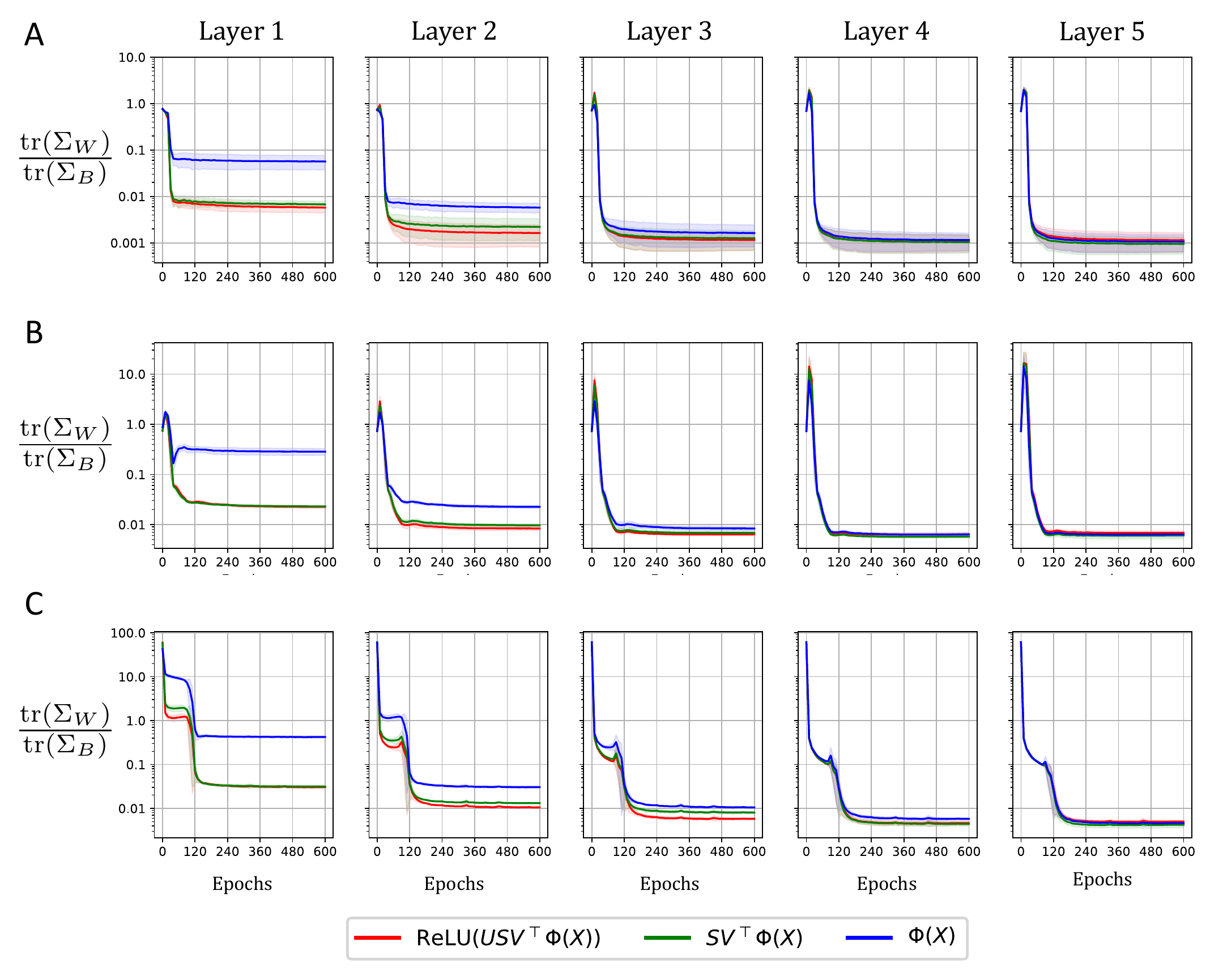}
    \caption{Feature variability collapse (NC1) from different singular value decomposition components on VGG. The first row (A) is MNIST, second (B) is CIFAR-10, third (C) is SVHN. As in the main text, we plot the NC1 metrics, for the original feature vectors $\Phi(X)$ (blue), the data after applying the right singular structure (green), and the data after the full layer application (red).}
    \label{fig: svd neural collapse, VGG}
\end{figure*}

\begin{figure*}[h]
    \centering
    \includegraphics[scale=0.61]{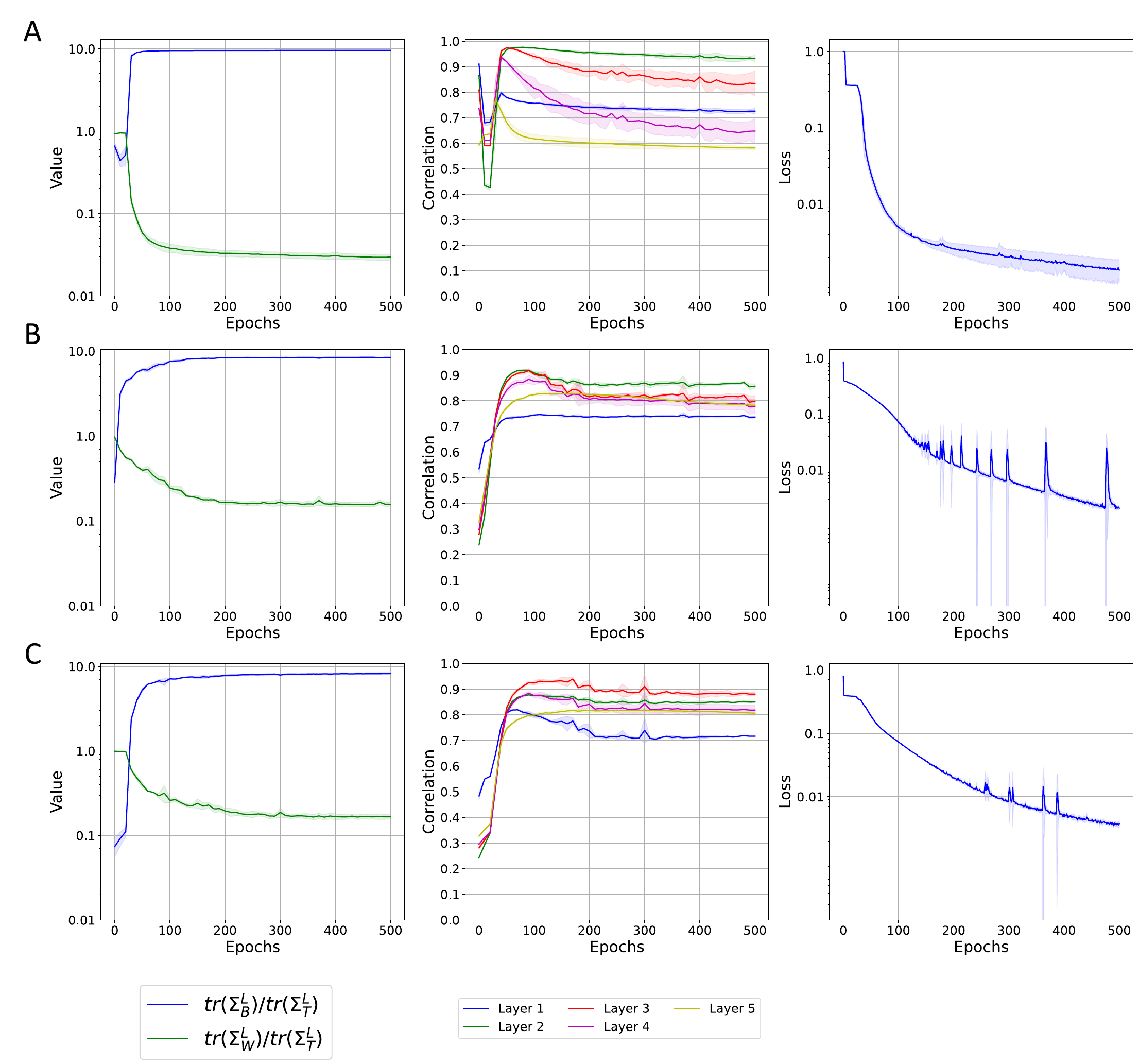}
    \caption{Train loss and NFA correlations for MLPs. The first row (A) is MNIST, second (B) is CIFAR-10, third (C) is SVHN. In the first column, we plot the evolution of $\tr{\Sigma_W}/\tr{\Sigma_T}$ and $\tr{\Sigma_B}/\tr{\Sigma_T}$, where $\Sigma_T \triangleq \Sigma_W + \Sigma_B$. In the second column we plot the development of the NFA, where correlation is measured between $W_l\tran W_l$ and the square root of the AGOP with respect to the inputs at layer $l$, $X_l$. The third column is the train loss of the neural network.}
    \label{fig: nfa metrics and losses, MLP}
\end{figure*}

\begin{figure*}[h]
    \centering
    \includegraphics[scale=0.61]{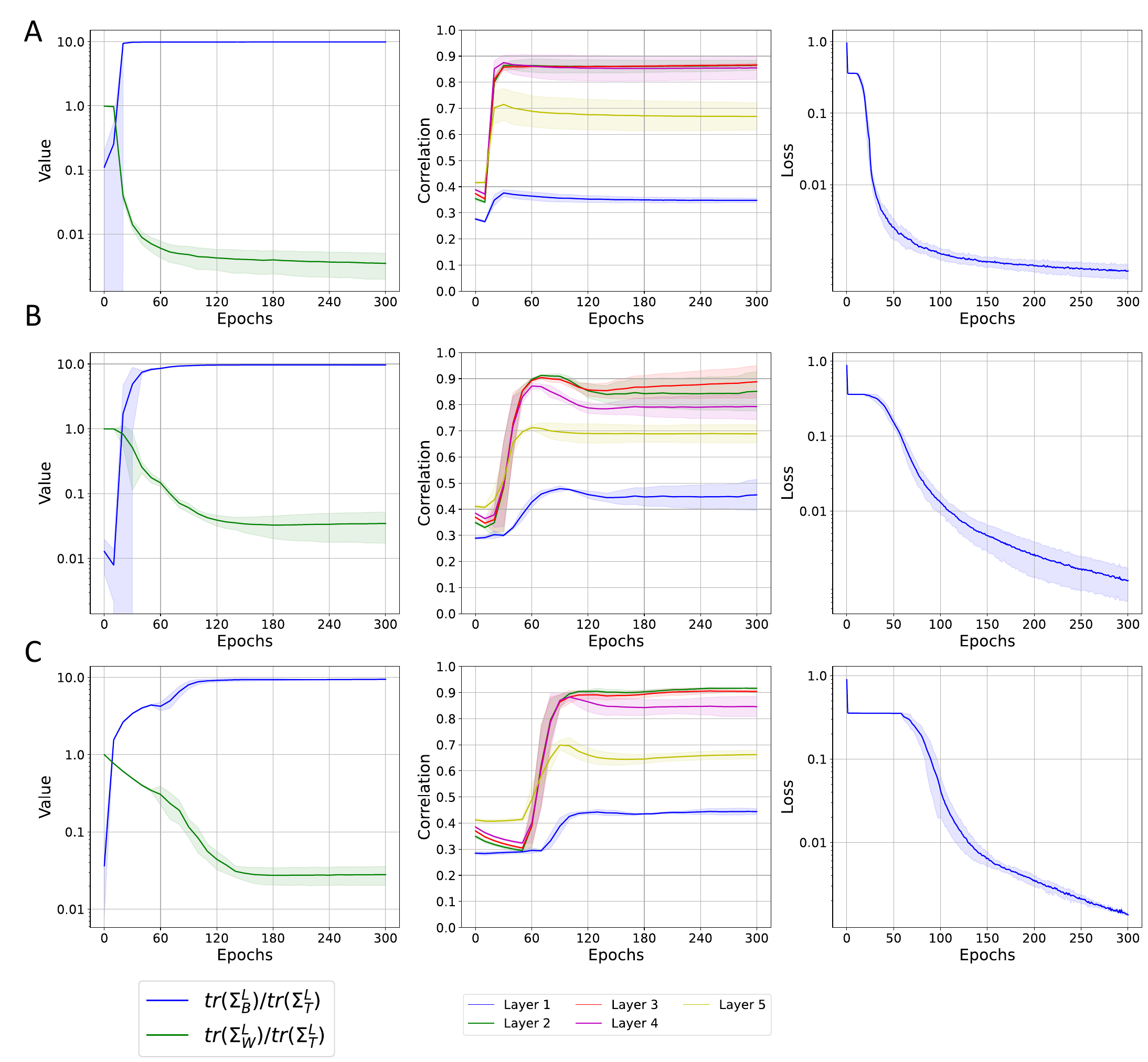}
    \caption{Train loss and NFA correlations for ResNet. The first row (A) is MNIST, second (B) is CIFAR-10, third (C) is SVHN. In the first column, we plot the evolution of $\tr{\Sigma_W}/\tr{\Sigma_T}$ and $\tr{\Sigma_B}/\tr{\Sigma_T}$, where $\Sigma_T \triangleq \Sigma_W + \Sigma_B$. In the second column we plot the development of the NFA, where correlation is measured between $W_l\tran W_l$ and the square root of the AGOP with respect to the inputs at layer $l$, $X_l$. The third column is the train loss of the neural network.}
    \label{fig: nfa metrics and losses, ResNet}
\end{figure*}

\begin{figure*}[h]
    \centering
    \includegraphics[scale=0.61]{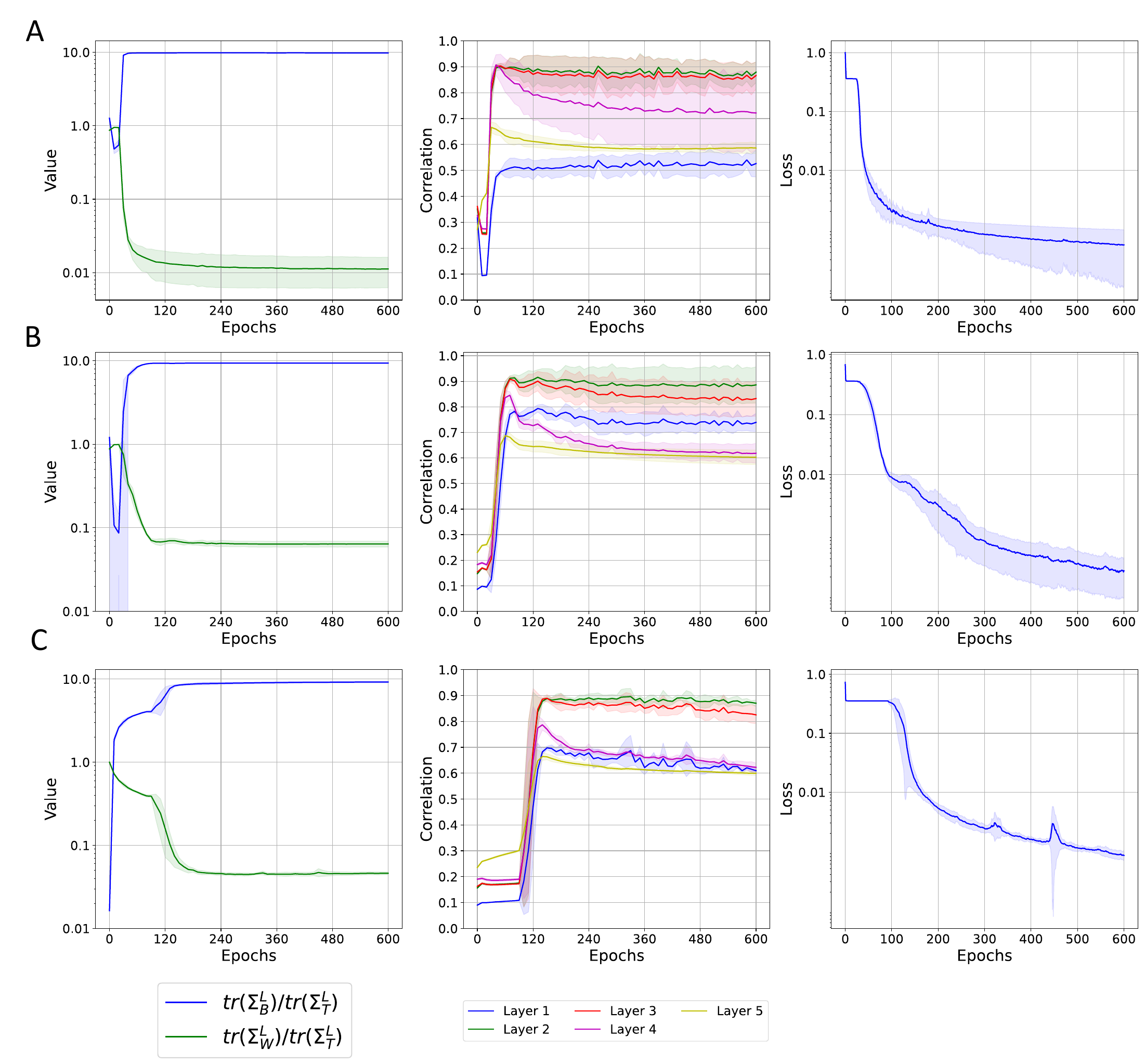}
    \caption{Train loss and NFA correlations for VGG. The first row (A) is MNIST, second (B) is CIFAR-10, third (C) is SVHN. In the first column, we plot the evolution of $\tr{\Sigma_W}/\tr{\Sigma_T}$ and $\tr{\Sigma_B}/\tr{\Sigma_T}$, where $\Sigma_T \triangleq \Sigma_W + \Sigma_B$. In the second column we plot the development of the NFA, where correlation is measured between $W_l\tran W_l$ and the square root of the AGOP with respect to the inputs at layer $l$, $X_l$. The third column is the train loss of the neural network.}
    \label{fig: nfa metrics and losses, VGG}
\end{figure*}

\clearpage
\section*{NeurIPS Paper Checklist}

\begin{enumerate}

\item {\bf Claims}
    \item[] Question: Do the main claims made in the abstract and introduction accurately reflect the paper's contributions and scope?
    \item[] Answer: \answerYes{} 
    \item[] Justification: We justify the claims in our abstract in the paper's content.
    \item[] Guidelines:
    \begin{itemize}
        \item The answer NA means that the abstract and introduction do not include the claims made in the paper.
        \item The abstract and/or introduction should clearly state the claims made, including the contributions made in the paper and important assumptions and limitations. A No or NA answer to this question will not be perceived well by the reviewers. 
        \item The claims made should match theoretical and experimental results, and reflect how much the results can be expected to generalize to other settings. 
        \item It is fine to include aspirational goals as motivation as long as it is clear that these goals are not attained by the paper. 
    \end{itemize}

\item {\bf Limitations}
    \item[] Question: Does the paper discuss the limitations of the work performed by the authors?
    \item[] Answer: \answerYes{} 
    \item[] Justification: We carefully discuss the assumptions of our theorems and our implications of experiments in the content of the paper.
    \item[] Guidelines:
    \begin{itemize}
        \item The answer NA means that the paper has no limitation while the answer No means that the paper has limitations, but those are not discussed in the paper. 
        \item The authors are encouraged to create a separate "Limitations" section in their paper.
        \item The paper should point out any strong assumptions and how robust the results are to violations of these assumptions (e.g., independence assumptions, noiseless settings, model well-specification, asymptotic approximations only holding locally). The authors should reflect on how these assumptions might be violated in practice and what the implications would be.
        \item The authors should reflect on the scope of the claims made, e.g., if the approach was only tested on a few datasets or with a few runs. In general, empirical results often depend on implicit assumptions, which should be articulated.
        \item The authors should reflect on the factors that influence the performance of the approach. For example, a facial recognition algorithm may perform poorly when image resolution is low or images are taken in low lighting. Or a speech-to-text system might not be used reliably to provide closed captions for online lectures because it fails to handle technical jargon.
        \item The authors should discuss the computational efficiency of the proposed algorithms and how they scale with dataset size.
        \item If applicable, the authors should discuss possible limitations of their approach to address problems of privacy and fairness.
        \item While the authors might fear that complete honesty about limitations might be used by reviewers as grounds for rejection, a worse outcome might be that reviewers discover limitations that aren't acknowledged in the paper. The authors should use their best judgment and recognize that individual actions in favor of transparency play an important role in developing norms that preserve the integrity of the community. Reviewers will be specifically instructed to not penalize honesty concerning limitations.
    \end{itemize}

\item {\bf Theory Assumptions and Proofs}
    \item[] Question: For each theoretical result, does the paper provide the full set of assumptions and a complete (and correct) proof?
    \item[] Answer: \answerYes{} 
    \item[] Justification: We provide all assumptions and proofs in the main paper and supplemental sections.
    \item[] Guidelines:
    \begin{itemize}
        \item The answer NA means that the paper does not include theoretical results. 
        \item All the theorems, formulas, and proofs in the paper should be numbered and cross-referenced.
        \item All assumptions should be clearly stated or referenced in the statement of any theorems.
        \item The proofs can either appear in the main paper or the supplemental material, but if they appear in the supplemental material, the authors are encouraged to provide a short proof sketch to provide intuition. 
        \item Inversely, any informal proof provided in the core of the paper should be complemented by formal proofs provided in appendix or supplemental material.
        \item Theorems and Lemmas that the proof relies upon should be properly referenced. 
    \end{itemize}

    \item {\bf Experimental Result Reproducibility}
    \item[] Question: Does the paper fully disclose all the information needed to reproduce the main experimental results of the paper to the extent that it affects the main claims and/or conclusions of the paper (regardless of whether the code and data are provided or not)?
    \item[] Answer: \answerYes{} 
    \item[] Justification: All relevant information is included in the main paper and supplemental sections.
    \item[] Guidelines:
    \begin{itemize}
        \item The answer NA means that the paper does not include experiments.
        \item If the paper includes experiments, a No answer to this question will not be perceived well by the reviewers: Making the paper reproducible is important, regardless of whether the code and data are provided or not.
        \item If the contribution is a dataset and/or model, the authors should describe the steps taken to make their results reproducible or verifiable. 
        \item Depending on the contribution, reproducibility can be accomplished in various ways. For example, if the contribution is a novel architecture, describing the architecture fully might suffice, or if the contribution is a specific model and empirical evaluation, it may be necessary to either make it possible for others to replicate the model with the same dataset, or provide access to the model. In general. releasing code and data is often one good way to accomplish this, but reproducibility can also be provided via detailed instructions for how to replicate the results, access to a hosted model (e.g., in the case of a large language model), releasing of a model checkpoint, or other means that are appropriate to the research performed.
        \item While NeurIPS does not require releasing code, the conference does require all submissions to provide some reasonable avenue for reproducibility, which may depend on the nature of the contribution. For example
        \begin{enumerate}
            \item If the contribution is primarily a new algorithm, the paper should make it clear how to reproduce that algorithm.
            \item If the contribution is primarily a new model architecture, the paper should describe the architecture clearly and fully.
            \item If the contribution is a new model (e.g., a large language model), then there should either be a way to access this model for reproducing the results or a way to reproduce the model (e.g., with an open-source dataset or instructions for how to construct the dataset).
            \item We recognize that reproducibility may be tricky in some cases, in which case authors are welcome to describe the particular way they provide for reproducibility. In the case of closed-source models, it may be that access to the model is limited in some way (e.g., to registered users), but it should be possible for other researchers to have some path to reproducing or verifying the results.
        \end{enumerate}
    \end{itemize}

\item {\bf Open access to data and code}
    \item[] Question: Does the paper provide open access to the data and code, with sufficient instructions to faithfully reproduce the main experimental results, as described in supplemental material?
    \item[] Answer: \answerYes{} 
    \item[] Justification: An anonymous link to code is provided in the supplementary materials.
    \item[] Guidelines:
    \begin{itemize}
        \item The answer NA means that paper does not include experiments requiring code.
        \item Please see the NeurIPS code and data submission guidelines (\url{https://nips.cc/public/guides/CodeSubmissionPolicy}) for more details.
        \item While we encourage the release of code and data, we understand that this might not be possible, so “No” is an acceptable answer. Papers cannot be rejected simply for not including code, unless this is central to the contribution (e.g., for a new open-source benchmark).
        \item The instructions should contain the exact command and environment needed to run to reproduce the results. See the NeurIPS code and data submission guidelines (\url{https://nips.cc/public/guides/CodeSubmissionPolicy}) for more details.
        \item The authors should provide instructions on data access and preparation, including how to access the raw data, preprocessed data, intermediate data, and generated data, etc.
        \item The authors should provide scripts to reproduce all experimental results for the new proposed method and baselines. If only a subset of experiments are reproducible, they should state which ones are omitted from the script and why.
        \item At submission time, to preserve anonymity, the authors should release anonymized versions (if applicable).
        \item Providing as much information as possible in supplemental material (appended to the paper) is recommended, but including URLs to data and code is permitted.
    \end{itemize}

\item {\bf Experimental Setting/Details}
    \item[] Question: Does the paper specify all the training and test details (e.g., data splits, hyperparameters, how they were chosen, type of optimizer, etc.) necessary to understand the results?
    \item[] Answer: \answerYes{} 
    \item[] Justification: These details are available in the main paper and supplemental sections.
    \item[] Guidelines:
    \begin{itemize}
        \item The answer NA means that the paper does not include experiments.
        \item The experimental setting should be presented in the core of the paper to a level of detail that is necessary to appreciate the results and make sense of them.
        \item The full details can be provided either with the code, in appendix, or as supplemental material.
    \end{itemize}

\item {\bf Experiment Statistical Significance}
    \item[] Question: Does the paper report error bars suitably and correctly defined or other appropriate information about the statistical significance of the experiments?
    \item[] Answer: \answerYes{} 
    \item[] Justification: We include error bars showing one standard deviation.
    \item[] Guidelines:
    \begin{itemize}
        \item The answer NA means that the paper does not include experiments.
        \item The authors should answer "Yes" if the results are accompanied by error bars, confidence intervals, or statistical significance tests, at least for the experiments that support the main claims of the paper.
        \item The factors of variability that the error bars are capturing should be clearly stated (for example, train/test split, initialization, random drawing of some parameter, or overall run with given experimental conditions).
        \item The method for calculating the error bars should be explained (closed form formula, call to a library function, bootstrap, etc.)
        \item The assumptions made should be given (e.g., Normally distributed errors).
        \item It should be clear whether the error bar is the standard deviation or the standard error of the mean.
        \item It is OK to report 1-sigma error bars, but one should state it. The authors should preferably report a 2-sigma error bar than state that they have a 96\% CI, if the hypothesis of Normality of errors is not verified.
        \item For asymmetric distributions, the authors should be careful not to show in tables or figures symmetric error bars that would yield results that are out of range (e.g. negative error rates).
        \item If error bars are reported in tables or plots, The authors should explain in the text how they were calculated and reference the corresponding figures or tables in the text.
    \end{itemize}

\item {\bf Experiments Compute Resources}
    \item[] Question: For each experiment, does the paper provide sufficient information on the computer resources (type of compute workers, memory, time of execution) needed to reproduce the experiments?
    \item[] Answer: \answerYes{} 
    \item[] Justification: This information is listed in the supplemental section.
    \item[] Guidelines:
    \begin{itemize}
        \item The answer NA means that the paper does not include experiments.
        \item The paper should indicate the type of compute workers CPU or GPU, internal cluster, or cloud provider, including relevant memory and storage.
        \item The paper should provide the amount of compute required for each of the individual experimental runs as well as estimate the total compute. 
        \item The paper should disclose whether the full research project required more compute than the experiments reported in the paper (e.g., preliminary or failed experiments that didn't make it into the paper). 
    \end{itemize}
    
\item {\bf Code Of Ethics}
    \item[] Question: Does the research conducted in the paper conform, in every respect, with the NeurIPS Code of Ethics \url{https://neurips.cc/public/EthicsGuidelines}?
    \item[] Answer: \answerYes{} 
    \item[] Justification: Our paper satisfies this code of ethics.
    \item[] Guidelines:
    \begin{itemize}
        \item The answer NA means that the authors have not reviewed the NeurIPS Code of Ethics.
        \item If the authors answer No, they should explain the special circumstances that require a deviation from the Code of Ethics.
        \item The authors should make sure to preserve anonymity (e.g., if there is a special consideration due to laws or regulations in their jurisdiction).
    \end{itemize}

\item {\bf Broader Impacts}
    \item[] Question: Does the paper discuss both potential positive societal impacts and negative societal impacts of the work performed?
    \item[] Answer: \answerNA{} 
    \item[] Justification: Our paper is theoretical in nature, and therefore has no immediate societal impact.
    \item[] Guidelines:
    \begin{itemize}
        \item The answer NA means that there is no societal impact of the work performed.
        \item If the authors answer NA or No, they should explain why their work has no societal impact or why the paper does not address societal impact.
        \item Examples of negative societal impacts include potential malicious or unintended uses (e.g., disinformation, generating fake profiles, surveillance), fairness considerations (e.g., deployment of technologies that could make decisions that unfairly impact specific groups), privacy considerations, and security considerations.
        \item The conference expects that many papers will be foundational research and not tied to particular applications, let alone deployments. However, if there is a direct path to any negative applications, the authors should point it out. For example, it is legitimate to point out that an improvement in the quality of generative models could be used to generate deepfakes for disinformation. On the other hand, it is not needed to point out that a generic algorithm for optimizing neural networks could enable people to train models that generate Deepfakes faster.
        \item The authors should consider possible harms that could arise when the technology is being used as intended and functioning correctly, harms that could arise when the technology is being used as intended but gives incorrect results, and harms following from (intentional or unintentional) misuse of the technology.
        \item If there are negative societal impacts, the authors could also discuss possible mitigation strategies (e.g., gated release of models, providing defenses in addition to attacks, mechanisms for monitoring misuse, mechanisms to monitor how a system learns from feedback over time, improving the efficiency and accessibility of ML).
    \end{itemize}
    
\item {\bf Safeguards}
    \item[] Question: Does the paper describe safeguards that have been put in place for responsible release of data or models that have a high risk for misuse (e.g., pretrained language models, image generators, or scraped datasets)?
    \item[] Answer: \answerNA{} 
    \item[] Justification: Our results are theoretical in nature and do not pose such risks.
    \item[] Guidelines:
    \begin{itemize}
        \item The answer NA means that the paper poses no such risks.
        \item Released models that have a high risk for misuse or dual-use should be released with necessary safeguards to allow for controlled use of the model, for example by requiring that users adhere to usage guidelines or restrictions to access the model or implementing safety filters. 
        \item Datasets that have been scraped from the Internet could pose safety risks. The authors should describe how they avoided releasing unsafe images.
        \item We recognize that providing effective safeguards is challenging, and many papers do not require this, but we encourage authors to take this into account and make a best faith effort.
    \end{itemize}

\item {\bf Licenses for existing assets}
    \item[] Question: Are the creators or original owners of assets (e.g., code, data, models), used in the paper, properly credited and are the license and terms of use explicitly mentioned and properly respected?
    \item[] Answer: \answerNA{} 
    \item[] Justification: The paper does not use existing assets.
    \item[] Guidelines:
    \begin{itemize}
        \item The answer NA means that the paper does not use existing assets.
        \item The authors should cite the original paper that produced the code package or dataset.
        \item The authors should state which version of the asset is used and, if possible, include a URL.
        \item The name of the license (e.g., CC-BY 4.0) should be included for each asset.
        \item For scraped data from a particular source (e.g., website), the copyright and terms of service of that source should be provided.
        \item If assets are released, the license, copyright information, and terms of use in the package should be provided. For popular datasets, \url{paperswithcode.com/datasets} has curated licenses for some datasets. Their licensing guide can help determine the license of a dataset.
        \item For existing datasets that are re-packaged, both the original license and the license of the derived asset (if it has changed) should be provided.
        \item If this information is not available online, the authors are encouraged to reach out to the asset's creators.
    \end{itemize}

\item {\bf New Assets}
    \item[] Question: Are new assets introduced in the paper well documented and is the documentation provided alongside the assets?
    \item[] Answer: \answerNA{} 
    \item[] Justification: We do not introduce new assets.
    \item[] Guidelines:
    \begin{itemize}
        \item The answer NA means that the paper does not release new assets.
        \item Researchers should communicate the details of the dataset/code/model as part of their submissions via structured templates. This includes details about training, license, limitations, etc. 
        \item The paper should discuss whether and how consent was obtained from people whose asset is used.
        \item At submission time, remember to anonymize your assets (if applicable). You can either create an anonymized URL or include an anonymized zip file.
    \end{itemize}

\item {\bf Crowdsourcing and Research with Human Subjects}
    \item[] Question: For crowdsourcing experiments and research with human subjects, does the paper include the full text of instructions given to participants and screenshots, if applicable, as well as details about compensation (if any)? 
    \item[] Answer: \answerNA{} 
    \item[] Justification:
    \item[] Guidelines:
    \begin{itemize}
        \item The answer NA means that the paper does not involve crowdsourcing nor research with human subjects.
        \item Including this information in the supplemental material is fine, but if the main contribution of the paper involves human subjects, then as much detail as possible should be included in the main paper. 
        \item According to the NeurIPS Code of Ethics, workers involved in data collection, curation, or other labor should be paid at least the minimum wage in the country of the data collector. 
    \end{itemize}

\item {\bf Institutional Review Board (IRB) Approvals or Equivalent for Research with Human Subjects}
    \item[] Question: Does the paper describe potential risks incurred by study participants, whether such risks were disclosed to the subjects, and whether Institutional Review Board (IRB) approvals (or an equivalent approval/review based on the requirements of your country or institution) were obtained?
    \item[] Answer: \answerNA{} 
    \item[] Justification:
    \item[] Guidelines:
    \begin{itemize}
        \item The answer NA means that the paper does not involve crowdsourcing nor research with human subjects.
        \item Depending on the country in which research is conducted, IRB approval (or equivalent) may be required for any human subjects research. If you obtained IRB approval, you should clearly state this in the paper. 
        \item We recognize that the procedures for this may vary significantly between institutions and locations, and we expect authors to adhere to the NeurIPS Code of Ethics and the guidelines for their institution. 
        \item For initial submissions, do not include any information that would break anonymity (if applicable), such as the institution conducting the review.
    \end{itemize}

\end{enumerate}

\end{document}